\theoremstyle{definition}
\newtheorem{lemma}{Lemma}
\newtheorem{theorem}{Theorem}
\newtheorem{remark}{Remark}
\newtheorem{corollary}{Corollary}
\newtheorem{example}{Example}
\newcommand{\X}{\mathbf{X}}
\newcommand{\vomega}{\ensuremath{\vec{\omega}}\xspace}
\newcommand{\vx}{\ensuremath{\vec{x}}\xspace}
\newcommand{\vy}{\ensuremath{\vec{y}}\xspace}
\newcommand{\vr}{\ensuremath{\vec{r}}\xspace}
\newcommand{\vz}{\ensuremath{\vec{z}}\xspace}
\renewcommand{\H}{\ensuremath{\mathcal{H}}\xspace}
\newcommand{\R}{\ensuremath{\mathbb{R}}\xspace}
\newcommand{\MMD}{\ensuremath{\operatorname{MMD}}\xspace}
\newcommand{\MMDM}{\ensuremath{\operatorname{MMDM}}\xspace}
\newcommand*\diff{\mathop{}\!\mathrm{d}}
\newcommand{\dual}{\ensuremath{\vec{\lambda}}\xspace}
\newcommand{\argmin}{\ensuremath{\operatorname{arg\,min}}\xspace}
\newcommand{\RMMD}{\ensuremath{R_{\text{MMD}}}\xspace}
\newcommand{\vlambda}{\ensuremath{\vec{\lambda}}\xspace}
\newcommand{\iu}{\mathrm{i}\mkern1mu}
\newcommand{\algo}{MMDN\xspace}
\title{MMD-Newton Method for Multi-objective Optimization}
\author{%
  Hao Wang\thanks{Hao Wang is also with Leiden applied quantum algorithms (aQa) initiative} \\
  Leiden University\\
  \texttt{h.wang@liacs.leidenuniv.nl} \\
  \And
  Chenyu Shi \\
  Leiden University\\
  \texttt{c.shi@liacs.leidenuniv.nl} \\
  \And
  Angel E. Rodriguez-Fernandez\\
  Cinvestav \\
  \texttt{angel.rodriguez@cinvestav.mx} \\
  \And
  Oliver Sch\"utze\\
  Cinvestav \\
  \texttt{schuetze@cs.cinvestav.mx} \\
}
\begin{document}
\maketitle

\begin{abstract}
Maximum mean discrepancy (MMD) has been widely employed to measure the distance between probability distributions. In this paper, we propose using MMD to solve continuous multi-objective optimization problems (MOPs). For solving MOPs, a common approach is to minimize the distance (e.g., Hausdorff) between a finite approximate set of the Pareto front and a reference set. Viewing these two sets as empirical measures, we propose using MMD to measure the distance between them. To minimize the MMD value, we provide the analytical expression of its gradient and Hessian matrix w.r.t. the search variables, and use them to devise a novel set-oriented, MMD-based Newton (MMDN) method. Also, we analyze the theoretical properties of MMD's gradient and Hessian, including the first-order stationary condition and the eigenspectrum of the Hessian, which are important for verifying the correctness of MMDN. To solve complicated problems, we propose hybridizing MMDN with multiobjective evolutionary algorithms (MOEAs), where we first execute an EA for several iterations to get close to the global Pareto front and then warm-start MMDN with the result of the MOEA to efficiently refine the approximation. We empirically test the hybrid algorithm on 11 widely used benchmark problems, and the results show the hybrid (MMDN + MOEA) can achieve a much better optimization accuracy than EA alone with the same computation budget.
\end{abstract}

\section{Introduction}
Assume a real-valued, black-box multi-objective function $F\colon \R^n \rightarrow \R^k$, and $F\in C^2(\R^n)$. We wish to minimize the following constrained multi-objective problem (CMOP):
%%%%%%%%%
\begin{equation}
\label{eq:MOPeq}
\begin{array}{cl}
\min\limits_{\vx\in\R^n} & \; F(\vx)\\
\mbox{s.t.}   & \; h_i(\vx) = 0,\quad i=1,\ldots, p, \\
              & \; g_j(\vx) \leq 0,\quad j=1,\ldots, q, \\
\end{array}
\end{equation}
%%%%%%%%%
where we assume the constraints are also $C^2$ functions. The optimal solution (under Pareto order) is typically a sub-manifold of $\R^n$, called the efficient set (which can be disconnected). The image of the efficient set is the Pareto front. In the multi-objective optimization community, there have been many algorithms proposed to solve the above problem, including multi-objective evolutionary algorithms (MOEAs)~\cite{BeumeNE07,ZhangL07,DebJ14}, Bayesian optimization~\cite{EmmerichYD0F16,YangEDB19,0025YA24}, and mathematical optimization methods~\cite{FliegeS00,eichfelder:08,miettinen2012nonlinear,martin:18,Sosa-HernandezS20,sinha:22,schuetze_pt:24,WangRUCS24}. 
Among these algorithms, we focus on mathematical methods since for differentiable functions, such methods converge much faster than MOEAs or Bayesian optimization.
Such methods typically employ a finite approximation to the efficient set, denoted by $X = \{\vx^{\,1}, \vx^{\,2}, \ldots, \vx^{\,\mu}\}\subseteq \R^n$, and its image $Y \coloneqq F[X] = \{\vy^{\,1}, \vy^{\,2}, \ldots, \vy^{\,\mu}\}$ (called Pareto approximation set). Given a rough guess $R\subseteq \R^k$ of the Pareto front, called the reference set, such methods minimize the distance between $Y$ and $R$ up to some metric, e.g., generational distance~\cite{IshibuchiMTN15}, inverted generational distance~\cite{CoelloS04}, and averaged Hausdorff distance~\cite{SchutzeELC12}. However, the performance of such algorithms heavily depends on the quality of $R$, e.g., if $R$ fails to span the entire Pareto front, then the algorithm that minimizes the distance thereto cannot recover the Pareto front fully. 

In this paper, we propose using the \emph{maximum mean discrepancy} (MMD) metric to measure the distance between $Y$ and $R$ due to its nice property: \emph{MMD not only incorporates the discrepancy between $Y$ and $R$, but it also measures the diversity of $Y$}, which can improve the Pareto approximation set even when the reference $R$ is imperfect. We provide an example in~\cref{fig:MMDN-example}: when minimizing MMD, we bring $Y$ onto the Pareto front and we increase the diversity/spread thereof.\\
Our contributions are: 
\begin{enumerate}
    \item We propose a novel, set-oriented, MMD-based Newton method (MMDN) for solving numerical multi-objective optimization problems (\cref{sec:MMD-Newton}).
    \item We provide the analytical expression of MMD's gradient and Hessian matrix. We analyze the first-order stationary condition for MMD's gradient, and we bound the eigenspectrum of its Hessian (\cref{sec:MMD-derivatives}).
    \item We devise a hybridization of MMDN and multi-objective evolutionary algorithms (MOEAs), which can solve complicated problems with much higher accuracy/efficiency (\cref{sec:MMD-EA-hybrid}). We benchmark the hybrid algorithm over three state-of-the-art MOEAs on 11 test problems, and the results show \algo can help improve the performance of MOEA (\cref{sec:experiments}).
\end{enumerate}

\begin{figure}[t]
  \centering
  \includegraphics[width=.85\textwidth, trim=4mm 5mm 130mm 15mm, clip]{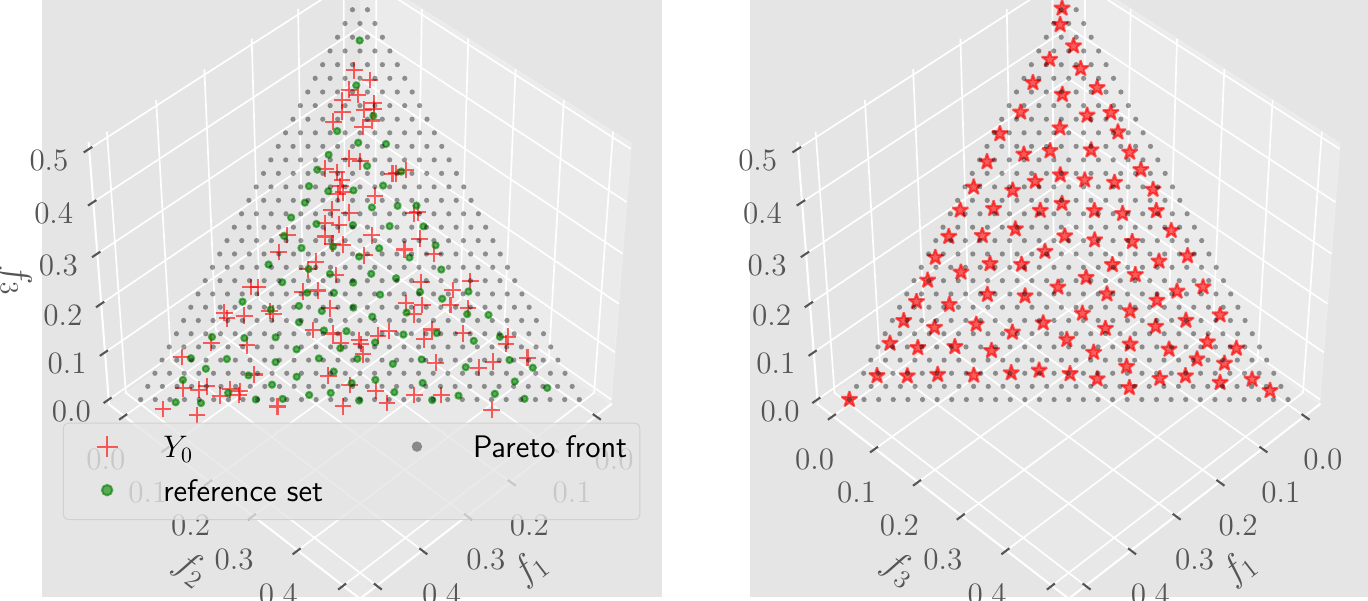}
  \caption{Example of MMD-Newton (\algo) achieving uniform coverage of the Pareto front with imperfect reference set. \textbf{Left:} on the three-objective DTLZ1~\cite{Deb2002DTLZ} problem, we initialize \algo with $Y_0$ and an imperfect reference set $R$ (green points) which does not span the entire Pareto front (the black points). \textbf{Right:} the result of five iterations of \algo (\cref{eq:Newton_step_S}) to minimize the $\MMD^2(Y_0,R)$, which covers the Pareto front uniformly, which is because MMD also maximizes the diversity of $Y_0$.
  \label{fig:MMDN-example}
  }
\end{figure}
This paper is organized as follows: \cref{sec:background} briefly reviews the background of this work. In~\cref{sec:MMD-Newton}, we propose the MMD-Newton method. In~\cref{sec:MMD-derivatives}, we provide MMD's derivatives and investigate their properties. \cref{sec:MMD-EA-hybrid} proposes a hybrid of \algo with MOEAs. In~\cref{sec:experiments}, we present benchmarking results of the hybrid algorithm, followed by some limitations of the work (\cref{sec:limitations}). Finally, \cref{sec:conclusion} concludes the paper and discusses potential future research. 

%%%%%%%%%%%%
\section{Background}\label{sec:background}
\paragraph{Numerical multi-objective optimization problem (MOP)} minimizes multiple conflicting objective functions, i.e., $F\colon \mathcal{X}\subseteq \R^n \rightarrow \R^k, F=(f_1, \ldots, f_k), f_i:\mathcal{X}\rightarrow \R, i\in[1..k]$. The \emph{Pareto order/dominance} $\prec$ on $\mathbb{R}^k$ is defined: $\vec{y}^{\,1} \prec \vec{y}^{\,2}$ iff. $y^{1}_i \leq y^{2}_i, i\in[1..k]$ and $\vec{y}^{\,1} \neq \vec{y}^{\,2}$. A point $\vec{x} \in \R^n$ is efficient iff. $\nexists \vec{x}'\in\R^n, F(\vec{x}') \prec F(\vec{x})$. The set of all efficient points of $\mathcal{X}$ is called the \emph{efficient set}. The image of the efficient set under $F$ is called the \emph{Pareto front} - typically a smooth sub-manifold of $\R^k$. The subset of $\R^k$ containing points that dominate at least one point of the Pareto front is called \emph{utopian region}. Multi-objective algorithms employ a finite multiset $X = \{\vec{x}^{\,1}, \dots,\vec{x}^{\,\mu}\}$ to approximate the efficient set. Its image $F(X)$ shall be referred to as \emph{Pareto approximation set}.

\paragraph{RKHS and kernel mean embedding} Consider a positive definite function/kernel $k\colon \R^k\times \R^k \rightarrow \R$, e.g., Gaussian and the Mat\'ern family~\cite{Genton01}. The kernel might be parameterized by a free parameter $\theta$ called length-scale, e.g., for the Gaussian kernel $k(\vy, \vy\,') = \exp\left(-\theta\|\vy - \vy\,'\|_2^2\right)$. To each kernel $k$ function, a feature map $\phi\colon \R^k\rightarrow \H$ is associated by the Moore-Aronszajn theorem, where \H is a reproducing kernel Hilbert space (RKHS) of functions $f\colon \R^k\rightarrow \R$ with the inner product $\langle \phi(\vy), \phi(\vy\,')\rangle_\H = k(\vy, \vy\,')$. The kernel mean embedding (KME) method maps a probability measure $P$ on $\R^k$ to \H~\cite{MuandetFSS17}: $\psi(P) =\int_{\R^k} k(\cdot, \vx) \diff P \in \H$. We call $\psi(P)$ the KME of $P$ with kernel $k$. Consider subsets in the objective space $Y,R\subseteq\R^k$ as empirical measures, i.e., $ \widehat{P}_R (\vec{z}) = \lambda^{-1}\sum_{\vx\in R} \mathds{1}(\vx\preceq\vec{z}), \widehat{P}_Y (\vec{z}) = \mu^{-1}\sum_{\vx\in Y} \mathds{1}(\vx\preceq\vec{z})$, we apply KME to $Y$ and $R$: $\psi(R) = \lambda^{-1}\sum_{\vr\in R} k(\cdot, \vr), \psi(Y) =  \mu^{-1}\sum_{\vy\in Y} k(\cdot, \vy)$.

\paragraph{Maximum mean discrepancy} (MMD)~\cite{GrettonBRSS12,BorgwardtGRKSS06} between empirical measures $Y$ and $R$ is:
%%%%%%%%%
\begin{align*} 
    \MMD^2(Y, R)=\norm{\psi(R) - \psi(Y)}_\H^2=\frac{1}{\mu^2} \sum_{i, j} k(\vy^{\,i}, \vy^{\,j}) + \frac{1}{\lambda^2} \sum_{i, j} k(\vr^{\,i}, \vr^{\,j}) - \frac{2}{\mu\lambda} \sum_{i, j}k(\vr^{\,i}, \vy^{\,j}).
\end{align*}
%%%%%%%%%
When minimizing $\MMD^2(Y, R)$ over all $Y\subseteq\R^k$ (fixing the reference set $R$), MMD contains effectively two terms: (1) the inner product of $\psi(R)$ and $\psi(Y)$ and (2) the RKHS norm $\norm{\psi(Y)}^2_\H$, where the former brings $Y$ closer to $R$ and the latter maximizes the spread of $Y$, if for instance, a monotonically decreasing kernel is used, e.g., Gaussian $k(\vec{x}, \vec{y}) = \exp(-\theta\lVert\vec{x} - \vec{y}\rVert_2^2)$. This observation is essential to the advantage of the MMD-Newton algorithm proposed in this paper. See~\cref{fig:MMDN-example} for an illustration.\\
If we use a bounded stationary kernel $k$, i.e., $k(\vy, \vy\,') = \tilde{k}(\vy - \vy\,')$ for some positive function $\tilde{k}$ and $\sup_{\vy} |\tilde{k}(\vy)| < \infty$, then by the Bochner theorem~\cite{bochner1933monotone}, MMD admits the spectral representation~\cite{MuandetFSS17}:
%%%%%%%%%
\begin{align} \label{eq:spectrum-MMD}
    \MMD^2(Y, R) = \int_{\R^k} \left\vert \widehat{\varphi}_Y  - \widehat{\varphi}_R \right\vert^2 G\left(\diff \vomega\right), 
\end{align}
%%%%%%%%%
where $\widehat{\varphi}_Y(\vomega) = \frac{1}{\mu} \sum_{\alpha=1}^{\mu} \exp\left(\iu \langle\vomega, \vy^{\,\alpha}\rangle\right), \widehat{\varphi}_R(\vomega) = \frac{1}{\lambda} \sum_{\alpha=1}^{\lambda} \exp\left(\iu \langle\vomega, \vr^{\,\alpha}\rangle\right)$ are empirical characteristic function of $Y$ and $R$, respectively, and 
%%%%%%%%%
$G$ is the non-negative Borel measure on $\R^k$ associated to the kernel $k$. Recently, MMD has been proposed as a performance indicator to evaluate the quality of Pareto front approximations~\cite{CaiXLSXLI22}. It is well-known that MMD is bounded above by the Wasserstein-1 and total variation metric~\cite{SriperumbudurGFSL10,VayerG23}. In general, it remains an open question how MMD is related to other metrics in the multi-objective optimization community, e.g., average Hausdorff distance~\cite{SchutzeELC12}.

\section{MMD-Newton Method} \label{sec:MMD-Newton}
\paragraph{Set-oriented approach for multi-objective optimization} We stress that a set $X = \{\vx^{\,1},\ldots, \vx^{\,\mu}\} \subset \R^n$ of 
$\mu$ candidate solutions in $\mathbb{R}^n$ can essentially be considered by 
a point $\X \in \mathbb{R}^{\mu n}$, i.e., $\X = (x^{1}_1,\ldots, x^{1}_n, x^{2}_1,\ldots, x^{2}_n, \ldots, x^{\mu}_1,\ldots, x^{\mu}_n) \in \R^{\mu n}$. Denote by $Y=F(X)$ the image of $X$. We consider MMD as a function $\X$, $\MMD^2(\X, R)\coloneqq \MMD^2(F[X], R)$. We denote by $\nabla \MMD^2(\X, R) \in \R^{\mu n}$ and $\nabla^2 \MMD^2(\X, R) \in \R^{\mu n \times \mu n}$ the the gradient and the Hessian of MMD at $\X$, respectively (see~\cref{eq:MMD-gradient} and~\cref{eq:MMD-Hessian} in~\cref{sec:MMD-derivatives}). Using $\X$, the equality constraint w.r.t.~\cref{eq:MOPeq} becomes 
%%%%%%%%%
\begin{equation}
 h_i(\vx^{\,j}) = 0,\qquad  i = 1,\ldots, p, \; j = 1,\ldots, \mu. 
\end{equation}
%%%%%%%%%
To ease the discussion, we hence define, for $i=1,\ldots, p$
%%%%%%%%%
\begin{equation}
\bar{h}^i\colon\mathbb{R}^{\mu n} \to \R^p, \qquad \bar{h}^i(\X) = \left(h_1(\vx^{\,i}), h_2(\vx^{\,i}),\ldots, h_p(\vx^{\,i})\right)^\top,
\end{equation}
%%%%%%%%%
and the overall constraint function $\bar{h}:\mathbb{R}^{\mu n}\to\mathbb{R}^{\mu p }, \bar{h}(\X) \coloneqq \left(\bar{h}^1(\X), \bar{h}^2(\X), \ldots, \bar{h}^p(\X)\right)$.
%%%%%%%%%
Let the functions $\bar{g}(\X)\coloneqq \left(\bar{g}^1(\X), \bar{g}^2(\X), \ldots, \bar{g}^q(\X)\right)$ be defined analogously for the inequalities, which we handle with the \emph{active set} method~\cite{WangRUCS24}: an inequality constraint $\bar{g}^\ell$ is considered active if $\bar{g}^\ell(\X) > -\operatorname{tol}$ for some small $\operatorname{tol}>0$. We treat the active ones as equality constraints. Denote the active set by $A(\X)=\{\ell\in[1..q]\colon \bar{g}^\ell(\X) > -\operatorname{tol}\}$. The set-oriented optimization problem is:
%%%%%%%%%
\begin{equation}
\label{eq:MMDmin}
\begin{array}{cl}
\min\limits_{\X\in\mathbb{R}^{\mu n}} & \MMD^2(\X, R) \\
\mbox{s.t.} & \bar{h}(\X) = 0\\
& \bar{g}^\ell(\X) = 0, \quad \ell \in A(\X)\\
\end{array}
\end{equation}
%%%%%%%%%
%%%%%%%%%
\paragraph{Deriving the Newton method}
In the following, we derive the Newton method for equality-constrained problems. For
the treatment of inequalities, we use an active set method similar to the one described
in \cite{WangRUCS24}. At every point $\X$, we are hence dealing with a particular
equality-constrained MOP. The treatment of active inequalities is exactly the same, which we omit here. The derivative/Jacobian of $\bar{h}$ at $\X$ is given by 
%%%%%%%%%
\begin{equation}
J \coloneqq \nabla \bar{h}(\X) = \left(\nabla \bar{h}^1(\X)^\top, \ldots, \nabla \bar{h}^p(\X)^\top\right)^\top=
 \operatorname{diag}\left(J^1, J^2\ldots, J^{\mu}\right) \in \mathbb{R}^{\mu p \times \mu n},
\end{equation}
%%%%%%%%%
where $J^i$ denotes the Jacobian of $h$ at $\vx^{\,i}$, i.e., $J^i = (\nabla h_1(\vx^{\,i}), \nabla h_2(\vx^{\,i}), \ldots, \nabla h_p (\vx^{\,i}))^\top \in \mathbb{R}^{p\times n}$.
%%%%%%%%%

In order to derive the Newton methods for this problem, we first consider the Karush-Kuhn-Tucker (KKT) equations of \cref{eq:MMDmin} with Lagrange multipliers $\lambda \in\mathbb{R}^{\mu p}$:
%%%%%%%%%
\begin{equation}
\label{eq:KKT}
\begin{split}
\nabla \MMD^2(\X, R) + J^\top\dual &= 0\\
\bar{h}(\X)                            &= 0.
\end{split}
\end{equation}
%%%%%%%%%
We re-formulate the KKT equations as a root-finding problem $\RMMD(\X,\dual) = 0$, where 
%%%%%%%%%
\begin{equation}
\renewcommand{\arraystretch}{1.5}
\label{eq:R_MMD=0}
\begin{split}
&\RMMD\colon \mathbb{R}^{\mu (n + p)} \to \mathbb{R}^{\mu (n + p)}, \quad \RMMD(\X,\dual) = 
\begin{pmatrix} 
\nabla \MMD^2(\X, R) + J^\top\dual \\
\bar{h}(\X) 
\end{pmatrix}.
\end{split}
\end{equation}
%%%%%%%%%
Deriving $\RMMD$ at $(\X,\vlambda)$ leads to
%%%%%%%%%
\begin{equation}
\renewcommand{\arraystretch}{1.5}
 \operatorname{D}\!\RMMD\left(\X,\dual\right) = 
 \begin{pmatrix} 
 \nabla^2 \MMD^2(\X, R) + S & J^\top\\ 
 J & 0
 \end{pmatrix}, 
 \quad S = \sum_{j=1}^{\mu p} \lambda_j \nabla^2 \bar{h}^j(\X),
\end{equation}
%%%%%%%%%
where $\operatorname{D}\!\RMMD(\X,\dual) \in \mathbb{R}^{\mu(n+p)\times \mu(n+p)}$ and $S\in \mathbb{R}^{\mu n \times \mu n}$.
%%%%%%%%%
We have now collected all to formulate the MMD-Newton algorithm: 
given search variables $\X_k\in\mathbb{R}^{\mu n}$ and $\dual_k \in\mathbb{R}^{\mu p}$ at iteration $k$, the next point $(\X_{k+1},\dual_{k+1})$ is obtained via solving 
 %%%%%%%%%
\begin{equation}
\label{eq:Newton_step_S}
\operatorname{D}\!\RMMD\left(\X_k,\dual_k\right)
\begin{pmatrix} \X_{k+1} - \X_{k}\\ \dual_{k+1} - \dual_{k}\end{pmatrix} 
  = -\RMMD\left(\X_k,\dual_k\right). 
\end{equation}
%%%%%%%%%
The new iterate is well defined in case $\operatorname{D}\!\RMMD(\X_k,\dual_k)$ is invertible, e.g., if $J$ has full row rank and if $\nabla^2 \MMD(\X) + S$ is positive definite on the tangent space of the constraints~\cite{NocedalW99}. In the case where all the constraints $h_i$ are linear, the matrix $S$ vanishes, leading to a significant reduction in the computational
cost, as there is no need to compute the second-order derivative of the constraint.
%%%%%%%%%
If the problem is unconstrained, i.e., $p=0$ in~\cref{eq:MOPeq}, then the Newton method step is
%%%%%%%%%
\begin{equation} \label{eq:newton-step_unconstrained}
\X_{k+1} = \X_k - \left[\nabla^2 \MMD^2(\X_k)\right]^{-1}\nabla \MMD^2(\X_k). 
\end{equation}
%%%%%%%%%
We shall refer to \cref{eq:Newton_step_S} as the \emph{MMD-Newton method} (MMDN).

\section{MMD's derivatives and the properties} \label{sec:MMD-derivatives}
\subsection{Gradient and Hessian}
MMD's partial derivative w.r.t.~each objective point is: $\forall \ell\in[1..\mu]$,
%%%%%%%%%
\begin{equation} \label{eq:MMD-dy}
    \frac{\partial}{\partial \vy^{\,\ell}}\MMD^2(Y, R) = \frac{2}{\mu^2} \sum_{i\neq\ell} \frac{\partial k(\vy^{\,i}, \vy^{\,\ell})}{\partial \vy^{\, \ell}} - \frac{2}{\mu\lambda} \sum_{i=1}^{\lambda} \frac{\partial k(\vr^{\,i}, \vy^{\,\ell})}{\partial \vy^{\,\ell}}.
\end{equation}
%%%%%%%%%
Partial derivatives of the kernel $\partial k(\vy^{\,i}, \vy^{\,\ell})/\partial \vy^{\, \ell}$ can be calculated easily when the kernel function is specified. 
% %%%%%%%%%
Now, consider the approximation set $Y$ is the image of $X\subseteq \R^n$ under $F$, i.e., $Y=F[X]$. The gradient of MMD w.r.t.~each decision point is:
%%%%%%%%%
\begin{equation} \label{eq:MMD-dx}
    \frac{\partial}{\partial \vx^{\,\ell}}\MMD^2(F[X], R) = \left(\frac{\partial}{\partial \vy^{\,\ell}}\MMD^2(Y, R)\right)\mathrm{D}F(\vx^{\,\ell}) \, ,
\end{equation}
%%%%%%%%%
where $\mathrm{D}F(\vx^{\,\ell}) \in \R^{k\times n}$ is the Jacobian of $F$ at $\vx^{\,\ell}$. Also, we choose $\partial\MMD(Y, R)/\partial \vy^{\,\ell}\in\R^{k}$ to be a row vector. Consider representing the set $X$ as a point $\X\in\R^{\mu n}$ (see~\cref{sec:MMD-Newton}), we define the MMD gradient as:
\begin{equation} \label{eq:MMD-gradient}
\nabla\MMD^2(\X, R)\coloneqq\left(\frac{\partial}{\partial \vx^{\,1}}\MMD^2(F[X], R),\ldots, \frac{\partial}{\partial \vx^{\,\mu}}\MMD^2(F[X], R)\right) \in \R^{\mu n}
\end{equation}
%%%%%%%%%
The second-order partial derivative w.r.t. the objective point is:
%%%%%%%%%
\begin{align} \label{eq:MMD-dy2}
\!\!&\frac{\partial^2}{\partial \vy^{\,m}\partial \vy^{\,\ell}}\MMD^2(Y, R)= \frac{2}{\mu^2} \frac{\partial^2}{\partial \vy^{\,m}\partial \vy^{\,\ell}} k(\vy^{\,m}, \vy^{\,\ell}), \; \text{ if } m\neq l \\
\!\!&\frac{\partial^2}{\partial \vy^{\,\ell}\partial \vy^{\,\ell}}\MMD^2(Y, R)= \frac{2}{\mu^2} \sum_{i\neq \ell} \frac{\partial^2}{\partial \vy^{\,\ell}\partial \vy^{\,\ell}} k(\vy^{\,i}, \vy^{\,\ell}) - \frac{2}{\mu\lambda} \sum_{i=1}^{\lambda} \frac{\partial^2}{\partial \vy^{\,\ell}\partial \vy^{\,\ell}}k(\vr^{\,i}, \vy^{\,\ell}), \; \text{ if } m=l
\end{align}
%%%%%%%%%
Similarly, we express the Hessian matrix of \MMD w.r.t. decision points:
%%%%%%%%%2
\begin{align}
    &\frac{\partial}{\partial\vx^{\,m}\partial\vx^{\,\ell}}\MMD^2(F[X], R) \nonumber\\
    &= \mathrm{D}F(\vx^{\,m})^\top\left(\frac{\partial^2}{\partial \vy^{\,m}\partial \vy^{\,\ell}}\MMD^2(F[X], R)\right) \mathrm{D}F(\vx^{\,\ell}) +  \left(\frac{\partial}{\partial \vy^{\,\ell}}\MMD^2(F[X], R)\right) \mathrm{D}^2F(\vx^{\,\ell}) \delta^m_{\,\ell} \nonumber
\end{align}
%%%%%%%%%
where $\mathrm{D}^2F(\vx^{\,\ell}) \in\R^{k\times n \times n}$ is the second-order derivative (tensor of (1, 2)-type) of $F$ at $\vx^{\,\ell}$ and $\delta^m_{\,\ell} = 1$ iff. $m=\ell$. Similarly, we define the MMD Hessian as:
\begin{equation} \label{eq:MMD-Hessian}
\nabla^2\MMD^2(\X, R)\coloneqq\left(\frac{\partial}{\partial\vx^{\,m}\partial\vx^{\,\ell}}\MMD^2(F[X], R)\right)_{m,\ell\in[1..\mu]} \in \R^{\mu n \times \mu n}
\end{equation}
%%%%%%%%%

\subsection{Stationary condition}
Here, we investigate the first-order optimality condition of MMD for a special case $|Y| = |R|$. 
%%%%%%%%%
\begin{lemma}[First-order stationarity of MMD]\label{lemma:first-order-MMD} Assume $|Y| = |R|=\mu$ and a bounded stationary kernel $k$, e.g., the Gaussian kernel. The following condition holds:
%%%%%%%%%
$$
\frac{\partial}{\partial \vy^{\,\ell}}\MMD^2(Y, R) = 0 \Longleftrightarrow 
\sum_{\alpha\neq\ell} \sin (\langle \vomega, \vy^{\,\alpha} - \vy^{\,\ell} \rangle) = \sum_{\beta}\sin (\langle \vomega, \vr^{\,\beta} - \vy^{\,\ell} \rangle), \quad \forall \vomega\in\R^k.
$$
%%%%%%%%%
\end{lemma}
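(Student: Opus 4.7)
The plan is to start from the spectral (Bochner) representation of $\MMD^2$ given in~\cref{eq:spectrum-MMD} and differentiate under the integral sign. Let $D(\vomega)\coloneqq\widehat{\varphi}_Y(\vomega)-\widehat{\varphi}_R(\vomega)$, so the integrand is $|D|^2 = D\bar{D}$. Since $\vy^{\,\ell}$ enters $\widehat{\varphi}_Y$ through only one complex exponential and does not affect $\widehat{\varphi}_R$, one has $\partial D/\partial\vy^{\,\ell} = (\iu\vomega/\mu)\exp(\iu\langle\vomega,\vy^{\,\ell}\rangle)$. Using $\partial|D|^2/\partial\vy^{\,\ell} = 2\operatorname{Re}\bigl((\partial D/\partial\vy^{\,\ell})\,\bar{D}\bigr)$ and expanding $\bar{D}$ produces a sum of terms $\exp(\iu\langle\vomega,\vy^{\,\ell}-\vz\rangle)$ for $\vz\in Y\cup R$. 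Applying $\operatorname{Re}(\iu e^{\iu\theta}) = -\sin\theta$, dropping the $\alpha=\ell$ term (which contributes $\sin 0 = 0$), and using $\sin(-x) = -\sin x$, I arrive at
\begin{equation*}
\frac{\partial}{\partial\vy^{\,\ell}}\MMD^2(Y,R) = \frac{2}{\mu^2}\int_{\R^k}\vomega\,f_\ell(\vomega)\,G(\diff\vomega),
\end{equation*}
where $f_\ell(\vomega)\coloneqq \sum_\beta\sin\langle\vomega,\vr^{\,\beta}-\vy^{\,\ell}\rangle - \sum_{\alpha\neq\ell}\sin\langle\vomega,\vy^{\,\alpha}-\vy^{\,\ell}\rangle$. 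The hypothesis $|Y|=|R|=\mu$ is used precisely to merge the prefactors $1/\mu^2$ and $1/(\mu\lambda)$ into a common $1/\mu^2$, so the two sine sums can be placed on opposite sides of the claimed identity with equal weight.

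The ``$\Longleftarrow$'' direction is then immediate: if $f_\ell\equiv 0$ on $\R^k$, the integrand vanishes pointwise and hence so does the gradient. For the ``$\Longrightarrow$'' direction, my plan is to exploit that $f_\ell$ is a finite real-linear combination of sinusoidal characters $\vomega\mapsto\sin\langle\vomega,\vec c\rangle$ indexed by the $2\mu-1$ distinct frequency vectors $\vec c\in\{\vr^{\,\beta}-\vy^{\,\ell}\}_\beta\cup\{\vy^{\,\alpha}-\vy^{\,\ell}\}_{\alpha\neq\ell}$. For a bounded stationary kernel whose spectral measure $G$ has full support on $\R^k$ (the Gaussian case named in the hypothesis), I would polarize the vector equation against an arbitrary direction $\vec v\in\R^k$ to reduce to $\int\langle\vomega,\vec v\rangle f_\ell(\vomega)G(\diff\vomega) = 0$ for every $\vec v$, then strip off the $\vomega$-factor via differentiation in $\vec v$, and finally use the linear independence of distinct sinusoidal characters under a full-support measure to conclude that the sinusoidal coefficients must cancel; the pointwise identity $f_\ell\equiv 0$ on all of $\R^k$ then follows by real-analyticity of $f_\ell$.

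The main obstacle will be exactly this last step. The gradient calculation via the spectral representation and the sine-sum rewriting are essentially routine bookkeeping, but passing from ``the vector integral vanishes'' to ``the integrand vanishes pointwise for every $\vomega\in\R^k$'' is strictly stronger than what a naive first-moment argument delivers, and relies crucially on both the full-support property of $G$ and the finite-analytic structure of $f_\ell$. If $G$ were degenerate, the $\Longrightarrow$ direction could fail in general; the restriction to bounded stationary kernels in the hypothesis of the lemma is precisely what is needed to make the biconditional go through.
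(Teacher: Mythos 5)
Your derivation of the integral formula for the gradient is correct and essentially identical to the paper's: the paper also starts from the spectral representation \cref{eq:spectrum-MMD}, expands the squared modulus into complex exponentials, and differentiates to obtain \cref{eq:MMD-gradient-integral}, which agrees with your expression up to an overall sign of $f_\ell$. The ``$\Longleftarrow$'' direction is likewise fine, and it is the only implication that genuinely follows from this representation.

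The gap is in your ``$\Longrightarrow$'' direction, and the proposed repair does not close it. Pairing the vector identity with an arbitrary $\vec{v}$ gives $\int\langle\vomega,\vec{v}\rangle f_\ell(\vomega)\,G(\diff\vomega)=0$ for all $\vec{v}$, but this expression is linear in $\vec{v}$, so ``differentiating in $\vec{v}$'' merely reproduces the same $k$ scalar equations; no new information is extracted. Linear independence of the characters $\vomega\mapsto\sin\langle\vomega,\vec{c}\rangle$ would let you cancel coefficients only if you already had a pointwise (or distributional) identity in $\vomega$, whereas all you know is that one vector-valued moment of $f_\ell$ against $G$ vanishes, i.e., $k$ real numbers are zero --- far too little to force the $2\mu-1$ sinusoidal terms to cancel for every $\vomega$. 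Concretely, for the Gaussian kernel the moment is available in closed form: with $\vec{a}_\alpha=\vy^{\,\alpha}-\vy^{\,\ell}$ and $\vec{b}_\beta=\vr^{\,\beta}-\vy^{\,\ell}$ one has $\int\vomega\sin\langle\vomega,\vec{c}\rangle\, G(\diff\vomega)=2\theta\,\vec{c}\,e^{-\theta\norm{\vec{c}}_2^2}$, so stationarity is equivalent to the finite-dimensional equation $\sum_{\alpha\neq\ell}\vec{a}_\alpha e^{-\theta\norm{\vec{a}_\alpha}_2^2}=\sum_\beta\vec{b}_\beta e^{-\theta\norm{\vec{b}_\beta}_2^2}$, which admits many configurations where the frequency vectors do not match up and hence $f_\ell\not\equiv 0$. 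Be aware that the paper's own proof disposes of exactly this step with the single phrase that $G$ is ``non-degenerating''; it asserts rather than proves the implication. You have correctly located the crux, but neither your argument nor the paper's supplies it, so only the ``$\Longleftarrow$'' half of the stated equivalence is actually established.
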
 
\begin{proof} See~\cref{sec:proof-first-order-MMD} \end{proof}

%%%%%%%%%
%%%%%%%%%
\begin{remark}
    The condition of~\cref{lemma:first-order-MMD} holds in two cases: (1) $Y = R$ and (2) the terms in the sum cancel due to symmetry in the sets $Y$ and $R$. For instance, in $\R^2$, we have: $\vy^{\,\ell} = (0, 0), \vy^{\,1} = (a, 0), \vy^{\,2} = (-a, 0), \vr^{\,1} = (0, b), \vr^{\,2} = (0, -b)$ for some $a, b>0$. For an arbitrary $\vomega = (c\cos\theta, c\sin\theta)$ for some $c\geq 0, \theta\in[0,2\pi)$, we have $\langle \vomega, \vy^{\,\ell} - \vy^{\,1} \rangle = -ac \cos\theta, \langle \vomega, \vy^{\,\ell} - \vy^{\,2} \rangle = ac \cos\theta, \langle \vomega, \vy^{\,\ell} - \vr^{\,1} \rangle = -bc \sin\theta, \langle \vomega, \vy^{\,\ell} - \vr^{\,2} \rangle = bc \sin\theta$. Then, the condition holds for all $\vomega\in\R^2$.
\end{remark}
%%%%%%%%%
\begin{remark}
Taking Eq.~\eqref{eq:MMD-gradient-integral} (see \cref{sec:proof-first-order-MMD}) % proof of~\cref{lemma:first-order-MMD} in the appendix
, we can bound the magnitude of the gradient: 
%%%%%%%%%
\begin{align*}
    \norm{\frac{\partial}{\partial \vy^{\,\ell}}\MMD^2(Y, R)}_2 
    & \leq \frac{2}{\mu}\left[\bar{D}(\vy^{\,\ell}, Y) + \bar{D}(\vy^{\,\ell}, R)\right]\mathbb{E}\left(\norm{\vomega}_2^2\right),
\end{align*}
%%%%%%%%%
where $\bar{D}(\vy^{\,\ell}, Y)=\frac{1}{\mu}\sum_{\alpha\neq\ell} \norm{\vy^{\,\alpha} - \vy^{\,\ell}}_2$ and $\bar{D}(\vy^{\,\ell}, R)=\frac{1}{\mu}\sum_{\beta} \norm{\vr^{\,\beta} - \vr^{\,\ell}}_2$ denote the average distance from $\vy^{\,\ell}$ to other points in $Y$ and to other points in $R$, respectively. The above upper bound is also controlled by $\mathbb{E}(\norm{\vomega}_2^2)=\int_{\R^k} \norm{\vomega}^2_2 G(\diff \vomega)$ which only depends on the choice of the kernel.
\end{remark}
%%%%%%%%%
\begin{example}
For a Gaussian kernel $k(\vy, \vy\,') = \exp(-\theta\norm{\vy - \vy\,'}_2^2)$, the spectral density of the corresponding measure $G$ is $S(\vomega) = \diff G / \diff \vomega = (4\pi\theta)^{-k/2}\exp(-\norm{\vomega}_2^2 / 4\theta)$, and we have:
%%%%%%%%%
$$
\int_{\R^k} \norm{\vomega}^2_2 G(\diff \vomega) = \int_{\R^k} \norm{\vomega}^2_2 S(\vomega) \diff \vomega = 2\theta k \implies \norm{\frac{\partial}{\partial \vy^{\,\ell}}\MMD^2(Y, R)}_2 \in \mathcal{O}\left(\frac{\theta k}{\mu}\right),
$$
%%%%%%%%%
suggesting that it might be sensible to set the parameter $\theta \in \mathcal{O}(\mu / k)$ to keep the gradient magnitude scaling only w.r.t.~the average distance from $\vy^{\,\ell}$ to other points in $Y$ and to $R$.
\end{example} 
%%%%%%%%%
\begin{theorem}[Necessary condition for Pareto optimality] \label{lemma:pareto-optimal-MMD} For bi-objective problems $F=(f_1, f_2)$, we assume $|R|=\mu$ and $|Y|=1$, the Gaussian kernel $k(\vy, \vy\,') = \exp(-\theta\norm{\vy - \vy\,'}_2^2)$ and its spectral density $\phi_k$, and $\vy=F(\vx)$. Suppose $\vx$ is locally efficient and the gradient of both objectives does not vanish, i.e., $\nabla f_1 \neq 0, \nabla f_2 \neq 0$, then the following implication holds:
%%%%%%%%%
$$
\frac{\partial}{\partial \vx}\MMD^2(\{F(\vx)\}, R) = 0 \implies \nabla f_1(\vx) + \lambda\nabla f_2(\vx) = 0, \lambda = \frac{\int_{\R^2} \nu_2 D(\vec{\nu};R, \vy, \theta) \phi_k(\vec{\nu}) \diff\vec{\nu}}{\int_{\R^2} \nu_1 D(\vec{\nu};R, \vy, \theta) \phi_k(\vec{\nu}) \diff\vec{\nu}},
$$
%%%%%%%%%
where $D(\vec{\nu};R, \vy, \theta)=\sum_{k=1}^{\mu}\sin (\sqrt{2\theta}\langle \vec{\nu}, \vr^{\,k} - \vy\, \rangle)$.
\end{theorem}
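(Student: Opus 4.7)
The plan is to combine the chain-rule expression of the MMD gradient from~\cref{eq:MMD-dx} with the Bochner/spectral representation used in the proof of~\cref{lemma:first-order-MMD}, and then match the resulting linear relation between $\nabla f_1$ and $\nabla f_2$ to the Fritz-John necessary condition for Pareto optimality.

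First, I would specialise~\cref{eq:MMD-dy} to $|Y|=1$, $Y=\{\vy\}$ with $\vy = F(\vx)$. The self-interaction sum over $i\neq\ell$ is empty, leaving
\[
\frac{\partial}{\partial \vy}\MMD^2(\{\vy\},R) \;=\; -\frac{2}{\mu}\sum_{i=1}^{\mu}\frac{\partial k(\vr^{\,i},\vy)}{\partial \vy} \;=:\; (a_1(\vx),\,a_2(\vx)).
\]
The chain rule~\cref{eq:MMD-dx} then decomposes the $\vx$-gradient as $a_1(\vx)\,\nabla f_1(\vx)^\top + a_2(\vx)\,\nabla f_2(\vx)^\top$, so MMD stationarity at $\vx$ is equivalent to the scalar-weighted relation $a_1\nabla f_1 + a_2\nabla f_2 = 0$.

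Next, I would identify the coefficients $a_j$ with the two integrals in the statement. The same Bochner-type manipulation used for~\cref{lemma:first-order-MMD}, together with the fact that the even (cosine) part in $\vomega$ integrates to zero, yields $\partial_\vy k(\vr^{\,i},\vy) = \int_{\R^2}\vomega\sin(\langle\vomega,\vr^{\,i}-\vy\rangle)\,G(\diff\vomega)$. Substituting $\vomega=\sqrt{2\theta}\,\vec{\nu}$ rewrites the sine argument as $\sqrt{2\theta}\langle\vec{\nu},\vr^{\,i}-\vy\rangle$, transforms the Gaussian spectral measure $G(\diff\vomega)$ into the standard bivariate normal density $\phi_k(\vec{\nu})\diff\vec{\nu}$, and factors a common $\sqrt{2\theta}$ out of the $\vomega$-factor. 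Up to the positive prefactor $-2\sqrt{2\theta}/\mu$, this gives
\[
a_j(\vx) \;\propto\; \int_{\R^2}\nu_j\,D(\vec{\nu};R,\vy,\theta)\,\phi_k(\vec{\nu})\,\diff\vec{\nu}, \qquad j=1,2.
\]

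Finally, I would invoke the classical Fritz-John necessary condition at a locally efficient $\vx$: with both $\nabla f_1(\vx)$ and $\nabla f_2(\vx)$ nonzero, there exist strictly positive $\alpha_1,\alpha_2$ with $\alpha_1\nabla f_1(\vx)+\alpha_2\nabla f_2(\vx)=0$, so the two gradients are antiparallel. Substituting this collinearity into $a_1\nabla f_1+a_2\nabla f_2=0$ and using $\nabla f_2(\vx)\neq 0$ forces $a_2=\lambda a_1$; the common prefactor cancels and $\lambda = a_2/a_1$ is exactly the stated ratio of integrals. I expect the main obstacle to be the bookkeeping of constants under the change of variables $\vomega\mapsto\vec{\nu}$ so that the formula matches verbatim; a minor caveat is the degenerate case $a_1(\vx)=0$, which by collinearity also forces $a_2(\vx)=0$ and makes $\lambda$ undefined, so the theorem's formula implicitly presupposes $a_1\neq 0$.
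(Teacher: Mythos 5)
Your proposal is correct and follows essentially the same route as the paper's proof: Bochner's spectral representation of the kernel gradient, the chain-rule factorization through $\mathrm{D}F(\vx)$, collinearity $\nabla f_1 = -c\nabla f_2$ at a locally efficient point, and the substitution $\vomega = \sqrt{2\theta}\,\vec{\nu}$ to express $\lambda$ as the stated ratio (the paper merely substitutes $\mathrm{D}F = \vec{n}\vec{a}^\top$ before integrating rather than after). Two trivial remarks: the prefactor $-2\sqrt{2\theta}/\mu$ you call ``positive'' is negative (harmless, since it cancels in the ratio), and your caveat about the denominator integral vanishing is a legitimate implicit assumption that the paper's proof also glosses over.
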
 
\begin{proof} See~\cref{sec:proof-pareto-optimal-MMD}\end{proof}
%%%%%%%%%%%%
\begin{example} \label{example:first-order-condition}
Note that we can well-approximate the sine function linearly when $\theta$ is small, i.e., $D(\vec{\nu};R,\vy,\theta)\to \sqrt{2\theta}\langle\vec{\nu}, \sum_k \vr^{\,k} - \vy\,\rangle$ as $\theta \to 0$. Denote by $\vec{m} = \mu^{-1}\sum_k \vr^{\,k}$ (the cluster center of the set $R$), we have $D(\vec{\nu};R,\vy,\theta)\approx \mu\sqrt{2\theta}\langle\vec{\nu}, \vec{m} - \vy\,\rangle$, which simplifies the integral further:
%%%%%%%%%
$$
\begin{array}{ll}
\int_{\R^2} \nu_1 D(\vec{\nu};R, \vy, \theta) \phi(\vec{\nu}) \diff\vec{\nu} \approx \mu\sqrt{2\theta}\int_{\R^2} \left(m_1\nu_1^2 + m_2\nu_1\nu_2\right)\phi(\vec{\nu}) \diff\vec{\nu} = m_1\mu\sqrt{2\theta} \\[0.5em]
\int_{\R^2} \nu_2 D(\vec{\nu};R, \vy, \theta) \phi(\vec{\nu}) \diff\vec{\nu} \approx \mu\sqrt{2\theta}\int_{\R^2} \left(m_1\nu_1\nu_2 + m_2\nu_2^2\right)\phi(\vec{\nu}) \diff\vec{\nu} = m_2\mu\sqrt{2\theta}
\end{array} \implies \lambda \approx \frac{m_2}{m_1}
$$
%%%%%%%%%
The result implies that when $\theta$ is small, the optimal point of MMD for a single point $\vx$ is an efficient point at which the normal space of the Pareto front has a slope approximately equal to $m_2/m_1$, meaning the normal space passes through the center of mass of the set $R$. We illustrate this observation in~\cref{fig:MMD-first-order-example} of \cref{sec:example-to-thm} on a hypothetical bi-objective problem. 
\end{example}
%%%%%%%%%

\subsection{Eigenspectrum of the Hessian}
Next, we analyze the eigenspectrum of each block of MMD's Hessian matrix. 
\begin{theorem}[Eigenspectrum of the Hessian block]\label{thm:eigenspectrum-Hessian-block}
Let $G$ denote the spectral measure corresponding to a bounded stationary kernel $k$ and $\vomega\sim G$. We assume $G$ has finite second and fourth moments, i.e., $C = \int_{\R^k} \vomega\vomega^\top G(\diff \vomega) < \infty, m_2 = \int_{\R^k} \norm{\vomega}^2_2 G(\diff \vomega) < \infty, m_4 = \int_{\R^k} \norm{\vomega}^4_2 G(\diff \vomega) < \infty$. Denote by $H^m_\ell = \frac{\partial^2}{\partial \vy^{\,m}\partial \vy^{\,\ell}}\MMD^2(Y, R)$ and let $\sigma(H)$ be an eigenvalue of $H$. We have the following bounds of $H$'s eigenvalues: 
%%%%%%%%%
\begin{align}
    &m\neq\ell,\, \frac{2}{\mu^2} \left(\sigma_{\text{min}}\left(C\right) - \frac{m_4}{2}\norm{\vy^{\,m} - \vy^{\,\ell}}_2^2\right) \leq \sigma(H^m_\ell) \leq \frac{2}{\mu^2}m_2, \label{eq:eigenspectrum-bound-off-diagonal}\\
    &m=\ell,\, \frac{2}{\mu}\left(\sigma_{\text{min}}\left(C\right) - m_2 - \frac{m_4}{2}\bar{D}^2(\vy^{\,\ell}, R)\right) \leq \sigma(H^\ell_\ell) \leq \frac{2}{\mu}\left(m_2 - \sigma_{\text{min}}\left(C\right) + \frac{m_4}{2}\bar{D}^2(\vy^{\,\ell}, Y)\right), 
    \label{eq:eigenspectrum-bound-diagonal}
\end{align}
%%%%%%%%%
where $\bar{D}^2(\vy^{\,\ell}, R)=\frac{1}{\mu}\sum_{\alpha} \norm{\vr^{\,\alpha} - \vy^{\,\ell}}_2^2$ and $\bar{D}^2(\vy^{\,\ell}, Y)=\frac{1}{\mu}\sum_{\beta\neq\ell} \norm{\vy^{\,\beta} - \vy^{\,\ell}}_2^2$.
\end{theorem}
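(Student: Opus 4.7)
The plan is to write each Hessian block as an integral against the spectral measure $G$ via the Bochner representation, and then to sandwich the Rayleigh quotient $v^\top H v$ uniformly over unit $v\in\R^k$, which immediately controls every eigenvalue since each block is symmetric (note that both $\vomega\vomega^\top$ and $\cos$ are even in $\vomega$). Starting from the real form $k(\vy,\vy')=\tilde k(\vy-\vy')=\int_{\R^k}\cos(\langle\vomega,\vy-\vy'\rangle)G(\diff\vomega)$ (the imaginary part vanishes by symmetry of $G$), a direct differentiation gives $\nabla^2\tilde k(\vz)=-\int\vomega\vomega^\top\cos(\langle\vomega,\vz\rangle)G(\diff\vomega)$. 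Substituting into~\cref{eq:MMD-dy2} yields, for $m\ne\ell$, $H^m_\ell=(2/\mu^2)\int\vomega\vomega^\top\cos(\langle\vomega,\vy^{\,m}-\vy^{\,\ell}\rangle)G(\diff\vomega)$, and, under the implicit assumption $|R|=\mu$ needed for the $2/\mu$ prefactor in~\cref{eq:eigenspectrum-bound-diagonal}, $H^\ell_\ell=(2/\mu^2)\int\vomega\vomega^\top\bigl[\sum_\alpha\cos(\langle\vomega,\vr^{\,\alpha}-\vy^{\,\ell}\rangle)-\sum_{i\ne\ell}\cos(\langle\vomega,\vy^{\,i}-\vy^{\,\ell}\rangle)\bigr]G(\diff\vomega)$.

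Three elementary inequalities then do all the work: the Taylor sandwich $1-x^2/2\le\cos x\le 1$; the Cauchy--Schwarz bound $(v^\top\vomega)^2\langle\vomega,\vz\rangle^2\le\|\vomega\|_2^4\|\vz\|_2^2$ valid for any unit $v$; and the PSD bounds $\sigma_{\min}(C)\le v^\top C v\le\operatorname{tr}(C)=m_2$, where $C=\int\vomega\vomega^\top G(\diff\vomega)\succeq 0$. For the off-diagonal block, $\cos\le 1$ immediately gives the upper bound $v^\top H^m_\ell v\le (2/\mu^2) m_2$; applying $\cos\ge 1-x^2/2$ followed by Cauchy--Schwarz to dominate the resulting $\int(v^\top\vomega)^2\langle\vomega,\vy^{\,m}-\vy^{\,\ell}\rangle^2 G(\diff\vomega)$ by $m_4\|\vy^{\,m}-\vy^{\,\ell}\|_2^2$ yields the lower bound, which together establish~\cref{eq:eigenspectrum-bound-off-diagonal}.

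For the diagonal block I would apply the lower Taylor bound to the $R$-$Y$ cross sum (positive) and the trivial bound $\cos\le 1$ to the $Y$-$Y$ self sum (which enters with a minus sign), so that the $C$-contribution becomes $(2/\mu)v^\top C v$ from the $\mu$ cross terms minus $(2(\mu-1)/\mu^2)v^\top C v$ from the $\mu-1$ self terms, and the remainder is $(m_4/\mu)\bar D^2(\vy^{\,\ell},R)$ after Cauchy--Schwarz; bounding $v^\top C v$ from below by $\sigma_{\min}(C)$ in the first and from above by $m_2$ in the second, and relaxing $(\mu-1)/\mu\le 1$, gives the stated lower bound in~\cref{eq:eigenspectrum-bound-diagonal}. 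The upper bound is obtained symmetrically by swapping the roles of the two cosine bounds. The main technical obstacle is the sign- and prefactor-bookkeeping in this diagonal case, where the two sums partially cancel at the $v^\top C v$ level; the stated bound is in fact marginally looser than the derivation directly gives, the slack being the cosmetic $(\mu-1)/\mu\le 1$ relaxation, but no new ideas are required beyond the three inequalities above.
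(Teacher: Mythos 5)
Your proposal is correct and follows essentially the same route as the paper: the same spectral (Bochner) representation of each block, the same sandwich $1-x^2/2\le\cos x\le 1$ combined with Cauchy--Schwarz and the moment bounds applied to the Rayleigh quotient, and the same implicit use of $|R|=\mu$. The only cosmetic difference is that for the diagonal block the paper splits $H^\ell_\ell=H_1-H_2$ and invokes Weyl's inequality while you bound the combined Rayleigh quotient directly; these are the same computation, down to the silently dropped factor $(\mu-1)/\mu$ in front of the $\sigma_{\min}(C)$ (resp.\ $m_2$) contribution.
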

\begin{proof} See~\cref{sec:proof-eigenspectrum-Hessian-block} \end{proof}
%%%%%%%%%
%%%%%%%%%
\begin{remark}
\cref{thm:eigenspectrum-Hessian-block} shows that the lower bound of the eigenvalue can be negative for the off-diagonal blocks ($m\neq\ell$) if points $\vy^{\,\ell}$ and $\vy^{\,m}$ are far away from each other, and for the diagonal blocks ($m=\ell$) if the point $\vy^{\,\ell}$ is far from the reference set $R$. In this case, some eigenvalues of the Hessian blocks can be negative. Particularly if the kernel $k$ induces an independent and isotropic measure $G$, then we have $C=\mathbb{E}(\vomega\vomega^\top)$ is multiple of the identity, and therefore $\sigma_{\text{min}}(C) = m_2 / k$, implying the lower bound $\frac{2}{\mu}\left(\sigma_{\text{min}}\left(C\right) - m_2 - \frac{m_4}{2}\bar{D}^2(\vy^{\,\ell}, R)\right)$ for $m=\ell$ is always negative.
\end{remark}
%%%%%%%%%
Finally, we bound the spectrum of the entire Hessian matrix.
%%%%%%%%%
\begin{corollary}[Eigenspectrum of the MMD Hessian]\label{corollary:eigenspectrum-Hessian} 
Taking the assumptions in~\cref{thm:eigenspectrum-Hessian-block}, we define the following value for each $\ell\in[1..\mu]$
%%%%%%%%%
\begin{equation} \label{eq:spectrum-radius}
    R_\ell = \frac{2}{\mu^2}\left[(2\mu - 1)m_2 - \sigma_{\text{min}}(C)\right] + \frac{m_4}{\mu}\left(\bar{D}^2(\vy^{\,\ell}, R) + \bar{D}^2(\vy^{\,\ell}, Y)\right).
\end{equation}
%%%%%%%%%
Then, the spectrum of the MMD Hessian is contained in the union of intervals $\left[-R_\ell, R_\ell\right]$, i.e., 
%%%%%%%%%
\begin{equation}
    \sigma\left(\left[\frac{\partial}{\partial\vy^{\,m}\partial\vy^{\,\ell}}\MMD^2(Y, R)\right]_{m,\ell\in[1..\mu]}\right) \in \bigcup_{\ell=1}^{\mu} \left[-R_\ell, R_\ell\right].
\end{equation}
%%%%%%%%%
\end{corollary}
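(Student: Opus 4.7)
The plan is to apply a block version of Gershgorin's disk theorem to the symmetric block Hessian $H = \bigl[H^m_\ell\bigr]_{m,\ell=1}^{\mu}$ and then substitute the block-wise eigenvalue bounds from \cref{thm:eigenspectrum-Hessian-block}. The block Gershgorin localization states that every eigenvalue $s$ of $H$ lies within distance $\rho_\ell \coloneqq \sum_{m\neq\ell}\|H^\ell_m\|_2$ of some eigenvalue of one of the diagonal blocks $H^\ell_\ell$. This is proved in the standard way: take an eigenpair $(s, v)$, localize on the block index $\ell$ at which $\|v^\ell\|_2$ is maximal, read off the $\ell$-th block row of $Hv = sv$ as $(H^\ell_\ell - sI)v^\ell = -\sum_{m\neq\ell} H^\ell_m v^m$, and apply the triangle inequality together with $\|v^m\|_2 \leq \|v^\ell\|_2$. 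Consequently, for some $\ell \in [1..\mu]$ one has $|s| \leq \max_{\mu \in \sigma(H^\ell_\ell)} |\mu| + \rho_\ell$, so it suffices to show that this right-hand side is bounded above by $R_\ell$.

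Next I would substitute \cref{eq:eigenspectrum-bound-diagonal}. Since the interval bounding $\sigma(H^\ell_\ell)$ straddles zero (the lower endpoint is non-positive because $\sigma_{\min}(C) \leq m_2$, and the upper endpoint is non-negative), the maximal absolute eigenvalue is at most the maximum of the two endpoint magnitudes. Relaxing $\max\{a,b\} \leq a + b$ on the non-negative summands gives a diagonal bound of the form $\tfrac{2}{\mu}\bigl(m_2 - \sigma_{\min}(C)\bigr) + \tfrac{m_4}{\mu}\bigl(\bar{D}^2(\vy^{\,\ell},Y) + \bar{D}^2(\vy^{\,\ell},R)\bigr)$. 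For each off-diagonal block $H^\ell_m$ ($m \neq \ell$), its spectral norm equals its largest absolute eigenvalue; by \cref{eq:eigenspectrum-bound-off-diagonal}, together with the triangle inequality $\bigl|\sigma_{\min}(C) - \tfrac{m_4}{2}\|\vy^{\,m}-\vy^{\,\ell}\|_2^2\bigr| \leq \sigma_{\min}(C) + \tfrac{m_4}{2}\|\vy^{\,m}-\vy^{\,\ell}\|_2^2$ applied to the negative endpoint, I obtain a uniform bound $\|H^\ell_m\|_2 \leq \tfrac{2}{\mu^2}\bigl(m_2 + \sigma_{\min}(C)\bigr) + \tfrac{m_4}{\mu^2}\|\vy^{\,m}-\vy^{\,\ell}\|_2^2$. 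Summing over $m \neq \ell$ and using the identity $\sum_{m\neq\ell}\|\vy^{\,m}-\vy^{\,\ell}\|_2^2 = \mu\,\bar{D}^2(\vy^{\,\ell}, Y)$ yields the off-diagonal contribution to $\rho_\ell$.

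Adding the diagonal and off-diagonal contributions and collecting terms, the coefficients in $m_2$ and $\sigma_{\min}(C)$ combine as $2\mu(m_2 - \sigma_{\min}(C)) + 2(\mu-1)(m_2 + \sigma_{\min}(C)) = 2[(2\mu-1)m_2 - \sigma_{\min}(C)]$, reproducing the polynomial part of $R_\ell$ in \cref{eq:spectrum-radius}; the mean-squared-distance terms assemble into $\tfrac{m_4}{\mu}\bigl(\bar{D}^2(\vy^{\,\ell},Y) + \bar{D}^2(\vy^{\,\ell},R)\bigr)$. Since $R_\ell \geq 0$, the union of the real intervals $[-R_\ell, R_\ell]$ then contains every eigenvalue of $H$, proving the claim. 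The main obstacle I expect is the sign bookkeeping for the off-diagonal blocks: the lower spectral bound in \cref{eq:eigenspectrum-bound-off-diagonal} can be either positive or negative depending on whether $\sigma_{\min}(C) \gtrless \tfrac{m_4}{2}\|\vy^{\,m}-\vy^{\,\ell}\|_2^2$, and a single uniform upper bound on $\|H^\ell_m\|_2$ that absorbs both cases must be chosen before summing; everything else is routine algebraic rearrangement.
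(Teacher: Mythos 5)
Your route is the same as the paper's: a block Gershgorin localization of the spectrum of $H=[H^m_\ell]_{m,\ell}$ around the diagonal blocks with radius $\rho_\ell=\sum_{m\neq\ell}\|H^m_\ell\|_2$, followed by substitution of the block-wise bounds of \cref{thm:eigenspectrum-Hessian-block}; your off-diagonal estimate $\|H^m_\ell\|_2\le\frac{2}{\mu^2}\bigl(m_2+\sigma_{\text{min}}(C)\bigr)+\frac{m_4}{\mu^2}\|\vy^{\,m}-\vy^{\,\ell}\|_2^2$ and the resulting row-sum bound coincide with the paper's $S_\ell$. The problem is the final assembly. Your diagonal bound already spends \emph{both} distance terms: you bound $\max_{s\in\sigma(H^\ell_\ell)}|s|$ by $\frac{2}{\mu}\bigl(m_2-\sigma_{\text{min}}(C)\bigr)+\frac{m_4}{\mu}\bigl(\bar{D}^2(\vy^{\,\ell},Y)+\bar{D}^2(\vy^{\,\ell},R)\bigr)$. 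The off-diagonal row sum then contributes a further $\frac{m_4}{\mu}\bar{D}^2(\vy^{\,\ell},Y)$ via $\sum_{m\neq\ell}\|\vy^{\,m}-\vy^{\,\ell}\|_2^2=\mu\,\bar{D}^2(\vy^{\,\ell},Y)$. Adding the two therefore yields a distance contribution of $\frac{m_4}{\mu}\bigl(\bar{D}^2(\vy^{\,\ell},R)+2\bar{D}^2(\vy^{\,\ell},Y)\bigr)$, not $\frac{m_4}{\mu}\bigl(\bar{D}^2(\vy^{\,\ell},R)+\bar{D}^2(\vy^{\,\ell},Y)\bigr)$ as you assert when ``collecting terms.'' What you have actually proved is containment in intervals of radius $R_\ell+\frac{m_4}{\mu}\bar{D}^2(\vy^{\,\ell},Y)$, which is strictly weaker than \cref{eq:spectrum-radius}; the step that closes the gap to $R_\ell$ is an arithmetic error as written.

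To land on $R_\ell$ you must not symmetrize the diagonal interval before adding the Gershgorin radius: bound the two reaches $\sigma_{\text{min}}(H^\ell_\ell)-S_\ell$ and $\sigma_{\text{max}}(H^\ell_\ell)+S_\ell$ separately using the asymmetric endpoints of \cref{eq:eigenspectrum-bound-diagonal}. Then the $\bar{D}^2(\vy^{\,\ell},R)$ term enters only through the diagonal's lower endpoint and the $\bar{D}^2(\vy^{\,\ell},Y)$ term only through $S_\ell$, and the lower reach combines to exactly $-R_\ell$. This is essentially what the paper does by writing $\sigma(H^\ell_\ell)\subseteq[-B_\ell,B_\ell]$ and asserting $[-B_\ell-S_\ell,\,B_\ell+S_\ell]\subseteq[-R_\ell,R_\ell]$ --- though the paper is itself terse at the same spot: the upper endpoint of \cref{eq:eigenspectrum-bound-diagonal} carries $\bar{D}^2(\vy^{\,\ell},Y)$ rather than $\bar{D}^2(\vy^{\,\ell},R)$, so the upper reach is only bounded by $\frac{2}{\mu^2}\bigl[(2\mu-1)m_2-\sigma_{\text{min}}(C)\bigr]+\frac{2m_4}{\mu}\bar{D}^2(\vy^{\,\ell},Y)$, which falls below $R_\ell$ only when $\bar{D}^2(\vy^{\,\ell},Y)\le\bar{D}^2(\vy^{\,\ell},R)$. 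So your difficulty is not incidental, but your write-up does not establish the stated radius, and you should either redo the bookkeeping reach-by-reach or accept the slightly larger (and still correct) radius your bounds actually give.
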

\begin{proof}See~\cref{sec:proof-eigenspectrum-Hessian}\end{proof}
%%%%%%%%%
\begin{remark}
\cref{corollary:eigenspectrum-Hessian} implies two important observations: (1) there is no guarantee that the MMD Hessian is positive-semidefinite since the lower bound of its spectrum is negative. For the MMD-Newton method, it is necessary to scale up the spectrum to ensure the Newton direction is actually descending; (2) For a large approximation set $Y$, the spectrum is dominated by $\frac{m_4}{\mu}\left(\bar{D}^2(\vy^{\,\ell}, R) + \bar{D}^2(\vy^{\,\ell}, Y)\right)$, reflecting how far a point $\vy^{\,\ell}$ is from the reference set $R$ and the other approximation points in $Y$. 
\end{remark}
%%%%%%%%%

\section{Hybridization MMD-Newton with Evolutionary Algorithms}\label{sec:MMD-EA-hybrid}
Like other Newton-like algorithm, \algo has a super-linear convergence rate (even 
quadratic convergence under certain mild conditions). However, it might get stuck in local critical points and requires a reference set $R$. Consider the multi-objective evolutionary algorithms (MOEAs), which are gradient-free, population-based global search methods that have a slow convergence rate (sub-linear) \cite{liang:23}. It is natural to hybridize \algo with an MOEA (see~\cref{alg:MMDM-Newton}) to combine the advantages of both sides: we first execute the MOEA for several iterations to get close to the global Pareto front. Then, we warm-start  \algo with MOEA's Pareto approximation set (denoted by $Y_0$) as the initial points and a reference set $R$ created from $Y_0$, realizing a fast convergence to the Pareto front. 
%%%%%%%%%
\begin{algorithm}[h]
\caption{Hybridization of MMD-based Newton with multi-objective evolutionary algorithm \label{alg:MMDM-Newton}}
\textbf{Procedure:} MMDN-MOEA($F, h, g, \mu, \mathcal{A}, N_1, N_2$)\;
\textbf{Input:} objective function $F\colon \R^n \rightarrow \R^k$, equality constraint $h\colon \R^n \rightarrow \R^p$, inequality $g\colon \R^n \rightarrow \R^q$, approximation set size $\mu$, an MOEA $\mathcal{A}$, the number of iterations of MOEA $N_1$, the number of iterations of MMDN $N_2$\;
    $k\leftarrow 0, \; \varepsilon \leftarrow 10^{-6},\; \delta \leftarrow 0.08$\;
    \tcp{\small execute MOEA for $N_1$ iteration to get the initial approximate set $\X_0$ for MMDN}
    $\X_0\leftarrow \mathcal{A}(F, h, g, \mu, N_1)$\; 
    \tcp{\small generate reference set $R$.~see~\cref{sec:shifting-detail} for detail.}
    $Y_0 \leftarrow F(\X_0)$\;
    interpolate $Y_0$\;
    $R\leftarrow$ use $k$-means clustering to select $\mu$ points from $Y_0$\;
    Compute the unit normal vector $\vec{\eta}$ to the convex hull $\operatorname{Conv}(Y_0)$\;
    \lFor{$\vr \in R$}{$\vr \leftarrow \vr + \delta \vec{\eta}$\Comment*[f]{\small shift $R$ into the utopian region}} 
    $\vec{\lambda}_0 \leftarrow 0$\Comment*[r]{\small Lagrange multipliers} 
    \While{$\norm{\nabla\MMDM^2(\X_k)}_2 \leq \varepsilon \wedge k < N_2$}{
        precondition the MMD Hessian $\nabla^2 \MMDM^2(\X_k, R)$\Comment*[r]{\small see~\cref{sec:preconditioning}}
        apply backtracking line search to determine the step size $s$\;
        $\begin{pmatrix} \X_{k+1} - \X_{k}\\ \dual_{k+1} - \dual_{k}\end{pmatrix} = -s\left[\operatorname{D}\!\RMMD(\X_k,\dual_k)\right]^{-1}\RMMD(\X_k,\dual_k)$\Comment*[r]{\small Newton step~\cref{eq:Newton_step_S}}
        $Y_{k+1} \leftarrow F(\X_{k+1})$ \;
        $k \leftarrow k+1$ 
    }
    \textbf{Output:} $\X_k, Y_k$
\end{algorithm}
%%%%%%%%%

We generate the reference set $R$ from $Y_0$ by considering $Y_0$ as a rough guess of the Pareto front. However, we face two issues: (1) $Y_0$ might not be well-shaped, e.g., a non-uniform density in space; (2) since we do not know the Pareto front, we should shift the set $Y_0$ into the utopian region of the objective space (containing points that dominate the Pareto front). To solve these issues, we first perform a filling of $Y_0$ and then use $k$-means clustering to select $\mu$ points from it, which ensures that $Y_0$ has an even density. Second, to determine the shifting direction, we compute the unit normal direction to the convex hull $\operatorname{Conv}(Y_0)$, denoted by $\vec{\eta}\in\R^k$. Then, $R$ is created by shifting each point in $Y_0$ with $\delta\vec{\eta}$, where $\delta >0$ is a scaling constant. We include details of the reference set generation in~\cref{sec:shifting-detail}. As proven in~\cref{corollary:eigenspectrum-Hessian}, the MMD's Hessian is not necessarily positive-definite. To ensure the Newton update in \cref{eq:Newton_step_S} is a descending direction, we must precondition the Hessian matrix. To realize this, we utilize the preconditioning Algorithm 3.3 from~\cite{NocedalW99} (details in~\cref{sec:preconditioning}). Lastly, we employ a backtracking line search with Armijo's condition~\cite{NocedalW99} to determine the step-size $s$ of the Newton step.

\section{Experiments} \label{sec:experiments}
\paragraph{Experimental setup} We test the performance of \cref{alg:MMDM-Newton} on 11 widely used benchmark problems for numerical multi-objective optimization: (1) ZDT1-4 problems~\cite{zitzler2000comparison} represent the major difficulties in solving bi-objective problems, e.g., convex (ZDT1) and concave Pareto front (ZDT2), disconnectedness of Pareto front (ZDT3), local Pareto front (ZDT4). We did not use ZDT5 and ZDT6 since the former is a discrete problem, and for the latter, its gradient is not Lipschitz continuous at the efficient set. Hence, our algorithm is not applicable to it. (2) DTLZ1-7 problems~\cite{Deb2002DTLZ} cover some challenges in three-objective scenarios, e.g., exponentially many local Pareto fronts (DTLZ3), hardness to cover the front uniformly (DTLZ4), degenerated Pareto front (DTLZ5 and 6). We chose three state-of-the-art MOEAs to warm-start \algo in~\cref{alg:MMDM-Newton}: NSGA-II~\cite{DebAPM02}, NSGA-III~\cite{DebJ14,JainD14}, MOEA/D~\cite{ZhangL07}, which are widely applied in real-world applications. All algorithms are executed for 30 independent runs on each problem.\footnote{The source code: \url{https://anonymous.4open.science/r/HypervolumeDerivatives-5287}}

\paragraph{Fair comparison of \algo to MOEA} For the hybrid algorithm, we execute the MOEA for 300 iterations ($N_1=300$) and then run \algo for five iterations ($N_2=5$). We used automatic differentiation (AD)~\cite{0002126} techniques to compute the Jacobian and Hessian of the objective functions. To show the effectiveness of \algo, we compare the hybrid with MOEA alone on each problem. For fairness, we need to calculate the number of function evaluations equivalent to five iterations of \algo since the MOEA does not use the gradient/Hessian of the objective function. Theoretically, for any differentiable function $F\colon\R^n\rightarrow\R^k$, the time complexity to compute the Jacobian is bounded by $4k\operatorname{OPS}(F)$ in the reverse-mode AD~\cite{Margossian19}, where $\operatorname{OPS}(F)$ is the number of addition/multiplication operations in one function evaluation. To make it realistic, we empirically measured the CPU time of AD over all problems, yielding an average value of ca.~$1.47\operatorname{OPS}(F)$. For the Hessian computation, the empirical estimation gives an average value of ca.~$1.89\operatorname{OPS}(F)$. The details are included in~\cref{sec:AD-CPU-time}. In the five iterations of \algo, we record the actual number of Jacobian and Hessian calls (which vary from one run to another due to the backtracking line search) to compute the equivalent number of function evaluations for MOEA.

\paragraph{Hyperparameters} The \algo method depends on the choice of the kernel function $k$ and its length-scale parameter $\theta$. Here, we propose a simple heuristic to determine the best pair $(k, \theta)$ for each test problem and baseline MOEA: we simply choose the pair that gives the smallest condition number of the Hessian $\nabla^2\MMD^2(Y,R)$ after preconditioning. Details are included in~\cref{sec:hyperparameter-setting}. For all three MOEAs, we took their default hyperparameter setting as in the \texttt{Pymoo} library~\cite{BlankD20} (Apache-2.0 License; version $0.6.1.1$). For the reference set generation step in~\cref{alg:MMDM-Newton}, we took the hyperparameter setting in~\cite{WangRUCS24}.
%%%%%%%%%%%%%%%%%%%%%%%%%%%%%%%%%%%%%%%%%%%%%
\begin{table}[ht]
\setlength{\tabcolsep}{2.7pt}
\fontsize{7.5pt}{10.5pt}\selectfont
\caption{Performance of the \texttt{\algo+ MOEA} hybrid (\cref{alg:MMDM-Newton}) compared to the same \texttt{MOEA alone} under the same function evaluation budget. %In the hybrid, \algo is warm-started with the final population of MOEA executed for 300 iterations. The \algo is run for five iterations. 
We measure the average Hausdorff distance $\Delta_2$ between the Pareto front and the final approximation set produced by each algorithm. We show the median and 10\% - 90\% quantile range (in the brackets) of $\Delta_2$ values over 30 independent runs. Mann–Whitney U test (with 5\% significance level) is used to check the significance, where the Holm-Sidak method is used to adjust the $p$-value for multiple testing. Three symbols $\uparrow$, $\leftrightarrow$, and $\downarrow$ indicate if the hybrid is significantly better, no difference, or worse than the baseline MOEA, respectively.
\label{tab:MMD-results}}
\centering
\vspace{2mm}
\begin{minipage}{0.48\textwidth}
\begin{tabular}{llll}
\toprule
Baseline & Problem & \algo + MOEA & MOEA alone  \\
\midrule
NSGA-II & ZDT1 & 0.0046(5.96e-04)$\uparrow$ & 0.0057(7.47e-04) \\
NSGA-II & ZDT2 & 0.0048(2.29e-04)$\uparrow$ & 0.0059(6.46e-04) \\
NSGA-II & ZDT3 & 0.0063(1.60e-03)$\uparrow$ & 0.0068(8.10e-04) \\
NSGA-II & ZDT4 & 0.0050(4.92e-04)$\uparrow$ & 0.0054(5.04e-04) \\
NSGA-II & DTLZ1 & 0.0139(6.98e-04)$\uparrow$ & 0.0179(1.58e-03) \\
NSGA-II & DTLZ2 & 0.0362(1.62e-03)$\uparrow$ & 0.0460(3.70e-03) \\
NSGA-II & DTLZ3 & 0.0403(1.14e-02)$\uparrow$ & 0.0473(3.36e-02) \\
NSGA-II & DTLZ4 & 0.0357(1.82e-03)$\uparrow$ & 0.0448(3.13e-03) \\
NSGA-II & DTLZ5 & 0.0045(6.39e-05)$\uparrow$ & 0.0046(1.30e-04) \\
NSGA-II & DTLZ6 & 0.0587(4.92e-02)$\leftrightarrow$ & 0.0573(4.23e-02) \\
NSGA-II & DTLZ7 & 0.0376(3.77e-03)$\uparrow$ & 0.0484(7.54e-03) \\
NSGA-III & ZDT1 & 0.0048(3.30e-06)$\uparrow$ & 0.0052(3.83e-05) \\
NSGA-III & ZDT2 & 0.0044(5.82e-06)$\uparrow$ & 0.0045(1.99e-05) \\
NSGA-III & ZDT3 & 0.0070(2.26e-03)$\uparrow$ & 0.0083(1.47e-04) \\
NSGA-III & ZDT4 & 0.0079(3.01e-02)$\leftrightarrow$ & 0.0080(5.61e-02) \\
NSGA-III & DTLZ1 & 0.0113(1.49e-04)$\uparrow$ & 0.0132(3.29e-05) \\
\end{tabular}
\end{minipage}
\hfill
\begin{minipage}{0.48\textwidth}
\begin{tabular}{llll}
NSGA-III & DTLZ2 & 0.0302(4.80e-05)$\uparrow$ & 0.0346(2.13e-05) \\
NSGA-III & DTLZ3 & 0.0310(1.43e-03)$\uparrow$ & 0.0350(1.49e-03) \\
NSGA-III & DTLZ4 & 0.0302(4.89e-05)$\uparrow$ & 0.0346(3.59e-05) \\
NSGA-III & DTLZ5 & 0.0046(3.12e-04)$\uparrow$ & 0.0250(6.56e-04) \\
NSGA-III & DTLZ6 & 0.0521(3.10e-02)$\uparrow$ & 0.0869(1.32e-01) \\
NSGA-III & DTLZ7 & 0.0355(1.84e-03)$\uparrow$ & 0.0588(1.10e-03) \\
MOEA/D & ZDT1 & 0.0067(1.77e-02)$\uparrow$ & 0.0158(7.97e-02) \\
MOEA/D & ZDT2 & 0.0078(7.23e-03)$\uparrow$ & 0.0589(2.72e-01) \\
MOEA/D & ZDT3 & 0.0703(1.15e-01)$\leftrightarrow$ & 0.0432(1.42e-01) \\
MOEA/D & ZDT4 & 0.0097(3.26e-02)$\uparrow$ & 2.0001(4.18e+00) \\
MOEA/D & DTLZ1 & 0.0117(3.39e-04)$\uparrow$ & 0.0132(3.12e-04) \\
MOEA/D & DTLZ2 & 0.0302(2.33e-05)$\uparrow$ & 0.0346(2.12e-05) \\
MOEA/D & DTLZ3 & 0.0336(4.03e-03)$\uparrow$ & 0.0406(1.65e-02) \\
MOEA/D & DTLZ4 & 0.0301(1.54e-04)$\uparrow$ & 0.0346(6.40e-01) \\
MOEA/D & DTLZ5 & 0.0226(3.33e-02)$\leftrightarrow$ & 0.0222(4.23e-05) \\
MOEA/D & DTLZ6 & 0.0592(6.13e-02)$\uparrow$ & 0.0883(1.62e-01) \\
MOEA/D & DTLZ7 & 0.1045(1.64e-02)$\downarrow$ & 0.0990(6.50e-03) \\ \midrule
win/tie/loss &  & 28/4/1 &  \\
\bottomrule
\end{tabular}
\end{minipage}
\end{table}

%%%%%%%%%%%%%%%%%%%%%%%%%%%%%%%%%%%%%%%%%%%%%

\paragraph{Results} We show the results in~\cref{tab:MMD-results}. For the performance assessment, we took the well-known average Hausdorff metric $\Delta_p$ ($p=2$), which measures the 
averaged Hausdorff distance between the Pareto front and the final approximation set given by~\cref{alg:MMDM-Newton}. 
In the table, column \texttt{Baseline} indicates the MOEA used to warm-start the \algo method, \texttt{\algo + MOEA} denotes the hybrid algorithm, \texttt{MOEA alone} means the same baseline MOEA executed under an evaluation budget equivalent to the hybrid. We observed that the hybrid \texttt{\algo + MOEA} outperforms \texttt{MOEA alone} for 28 cases out of 33, verifying \algo can refine the Pareto approximation set much more efficiently compared to MOEAs. For the one case we lose, i.e., MOEA/D algorithm as the baseline on DTLZ7 problem, we inspect the optimization data and find out that the final population of MOEA/D after 300 iterations concentrates on some sub-region of the Pareto front, hence the reference set and the starting points of \algo are of very low quality. While \algo has a certain capability to recover the Pareto front, it loses on this example against the 
global solver.

\section{Limitations}  \label{sec:limitations}
The first-order stationary condition in~\cref{lemma:pareto-optimal-MMD} is derived for a special case: bi-objective problems with a single approximation point. The \algo method, as a Newton method, is a local search method and can get stuck in local optima. When hybridizing \algo with MOEAs (global search), the performance of \algo can be affected by the quality (e.g., the diversity) of the Pareto approximation set of MOEA, as shown by the DTLZ7 problem with MOEA/D case in the previous section.

\section{Conclusion} \label{sec:conclusion}
This paper proposes using MMD to measure the distance between a Pareto approximation set and a reference set for solving numerical multi-objective optimization problems. We designed the MMD-Newton method (MMDN) to minimize the distance between the approximation and the reference sets, which is facilitated by analytical expressions of MMD's gradient and Hessian matrix. Also, we analyzed the first-order stationary condition of MMD and provided a bound for the eigenspectrum of its Hessian, implying that the Hessian is often indefinite. Hence, we suggest preconditioning the Hessian in the Newton method.\\
We observe an important property of MMD: while minimizing the distance, it also keeps the diversity of the approximation set. Consequently, we found out that the \algo method can improve the coverage of the Pareto front (attributed to the diversity maintenance) even if the reference set has poor coverage. Therefore, we further propose hybridizing \algo with multi-objective evolutionary algorithms (MOEAs) such that \algo can help MOEAs to improve the coverage and local convergence rate while MOEAs can provide reasonable starting points for \algo. The resulting hybrid algorithm significantly outperforms the MOEAs alone on 11 widely used benchmark problems, supporting the usefulness of \algo.\\
For further work, we plan to investigate the impact of the choice of kernel functions and their length-scale on the performance of \algo. Despite that, we propose a simple heuristic to select the kernel and length-scale (see~\cref{sec:hyperparameter-setting}), the theoretical implication of the kernel is unclear except for the impact on the eigenspectrum of the Hessian, as stated in~\cref{thm:eigenspectrum-Hessian-block}.

\bibliographystyle{plain}
\bibliography{ref}

\begin{thebibliography}{10}

\bibitem{BeumeNE07}
Nicola Beume, Boris Naujoks, and Michael T.~M. Emmerich.
\newblock {SMS-EMOA: Multiobjective selection based on dominated hypervolume}.
\newblock {\em Eur. J. Oper. Res.}, 181(3):1653--1669, 2007.

\bibitem{BlankD20}
Julian Blank and Kalyanmoy Deb.
\newblock {Pymoo: Multi-Objective Optimization in Python}.
\newblock {\em {IEEE} Access}, 8:89497--89509, 2020.

\bibitem{bochner1933monotone}
Salomon Bochner.
\newblock Monotone {F}unktionen, {S}tieltjessche {I}ntegrale und harmonische
  {A}nalyse.
\newblock {\em Mathematische Annalen}, 108(1):378--410, 1933.

\bibitem{BorgwardtGRKSS06}
Karsten~M. Borgwardt, Arthur Gretton, Malte~J. Rasch, Hans{-}Peter Kriegel,
  Bernhard Sch{\"{o}}lkopf, and Alexander~J. Smola.
\newblock Integrating structured biological data by kernel maximum mean
  discrepancy.
\newblock In {\em Proceedings 14th International Conference on Intelligent
  Systems for Molecular Biology 2006, Fortaleza, Brazil, August 6-10, 2006},
  pages 49--57, 2006.

\bibitem{jax2018github}
James Bradbury, Roy Frostig, Peter Hawkins, Matthew~James Johnson, Chris Leary,
  Dougal Maclaurin, George Necula, Adam Paszke, Jake Vander{P}las, Skye
  Wanderman-{M}ilne, and Qiao Zhang.
\newblock {JAX}: composable transformations of {P}ython+{N}um{P}y programs,
  2018.

\bibitem{CaiXLSXLI22}
Xinye Cai, Yushun Xiao, Zhenhua Li, Qi~Sun, Hanchuan Xu, Miqing Li, and Hisao
  Ishibuchi.
\newblock {A Kernel-Based Indicator for Multi/Many-Objective Optimization}.
\newblock {\em {IEEE} Trans. Evol. Comput.}, 26(4):602--615, 2022.

\bibitem{CoelloS04}
Carlos A.~Coello Coello and Margarita~Reyes Sierra.
\newblock {A Study of the Parallelization of a Coevolutionary Multi-objective
  Evolutionary Algorithm}.
\newblock In Ra{\'{u}}l Monroy, Gustavo Arroyo{-}Figueroa, Luis~Enrique Sucar,
  and Juan Humberto~Sossa Azuela, editors, {\em {MICAI} 2004: Advances in
  Artificial Intelligence, Third Mexican International Conference on Artificial
  Intelligence, Mexico City, Mexico, April 26-30, 2004, Proceedings}, volume
  2972 of {\em Lecture Notes in Computer Science}, pages 688--697. Springer,
  2004.

\bibitem{DebAPM02}
Kalyanmoy Deb, Samir Agrawal, Amrit Pratap, and T.~Meyarivan.
\newblock A fast and elitist multiobjective genetic algorithm: {NSGA-II}.
\newblock {\em {IEEE} Trans. Evol. Comput.}, 6(2):182--197, 2002.

\bibitem{DebJ14}
Kalyanmoy Deb and Himanshu Jain.
\newblock {An Evolutionary Many-Objective Optimization Algorithm Using
  Reference-Point-Based Nondominated Sorting Approach, Part I: Solving Problems
  With Box Constraints}.
\newblock {\em {IEEE} Trans. Evol. Comput.}, 18(4):577--601, 2014.

\bibitem{Deb2002DTLZ}
Kalyanmoy Deb, Lothar Thiele, Marco Laumanns, and Eckart Zitzler.
\newblock {Scalable Test Problems for Evolutionary Multiobjective
  Optimization}.
\newblock In {\em Proceedings of the 2002 Congress on Evolutionary Computation
  (CEC'2002)}, pages 825--830. IEEE Press, 2002.

\bibitem{delaunay}
B.~Delaunay.
\newblock Sur la sph\`ere vide.
\newblock {\em Bulletin de lAcad\'emie des Sciences de l'URSS. Classe des
  sciences math\'ematiques et na}, 1934(6):793--800, 1934.

\bibitem{eichfelder:08}
G.~Eichfelder.
\newblock {\em Adaptive Scalarization Methods in Multiobjective Optimization}.
\newblock Springer, 2008.

\bibitem{EmmerichYD0F16}
Michael Emmerich, Kaifeng Yang, Andr{\'{e}}~H. Deutz, Hao Wang, and Carlos~M.
  Fonseca.
\newblock {A Multicriteria Generalization of Bayesian Global Optimization}.
\newblock In Panos~M. Pardalos, Anatoly Zhigljavsky, and Julius Zilinskas,
  editors, {\em Advances in Stochastic and Deterministic Global Optimization},
  volume 107 of {\em Springer Optimization and Its Applications}, pages
  229--242. Springer, 2016.

\bibitem{EsterKSX96}
Martin Ester, Hans{-}Peter Kriegel, J{\"{o}}rg Sander, and Xiaowei Xu.
\newblock {A Density-Based Algorithm for Discovering Clusters in Large Spatial
  Databases with Noise}.
\newblock In Evangelos Simoudis, Jiawei Han, and Usama~M. Fayyad, editors, {\em
  Proceedings of the Second International Conference on Knowledge Discovery and
  Data Mining (KDD-96), Portland, Oregon, {USA}}, pages 226--231. {AAAI} Press,
  1996.

\bibitem{FliegeS00}
J{\"{o}}rg Fliege and Benar~Fux Svaiter.
\newblock Steepest descent methods for multicriteria optimization.
\newblock {\em Math. Methods Oper. Res.}, 51(3):479--494, 2000.

\bibitem{Genton01}
Marc~G. Genton.
\newblock {Classes of Kernels for Machine Learning: {A} Statistics
  Perspective}.
\newblock {\em J. Mach. Learn. Res.}, 2:299--312, 2001.

\bibitem{GrettonBRSS12}
Arthur Gretton, Karsten~M. Borgwardt, Malte~J. Rasch, Bernhard Sch{\"{o}}lkopf,
  and Alexander~J. Smola.
\newblock {A Kernel Two-Sample Test}.
\newblock {\em J. Mach. Learn. Res.}, 13:723--773, 2012.

\bibitem{0002126}
A.~Griewank.
\newblock {\em Evaluating derivatives - principles and techniques of
  algorithmic differentiation}, volume~19 of {\em Frontiers in Applied
  Mathematics}.
\newblock {SIAM}, 2000.

\bibitem{IshibuchiMTN15}
Hisao Ishibuchi, Hiroyuki Masuda, Yuki Tanigaki, and Yusuke Nojima.
\newblock {Modified Distance Calculation in Generational Distance and Inverted
  Generational Distance}.
\newblock In Ant{\'{o}}nio Gaspar{-}Cunha, Carlos~Henggeler Antunes, and Carlos
  A.~Coello Coello, editors, {\em Evolutionary Multi-Criterion Optimization -
  8th International Conference, {EMO} 2015, Guimar{\~{a}}es, Portugal, March 29
  -April 1, 2015. Proceedings, Part {II}}, volume 9019 of {\em Lecture Notes in
  Computer Science}, pages 110--125. Springer, 2015.

\bibitem{JainD14}
Himanshu Jain and Kalyanmoy Deb.
\newblock {An Evolutionary Many-Objective Optimization Algorithm Using
  Reference-Point Based Nondominated Sorting Approach, Part II: Handling
  Constraints and Extending to an Adaptive Approach}.
\newblock {\em {IEEE} Trans. Evol. Comput.}, 18(4):602--622, 2014.

\bibitem{liang:23}
Jing Liang, Xuanxuan Ban, Kunjie Yu, Boyang Qu, Kangjia Qiao, Caitong Yue,
  Ke~Chen, and Kay~Chen Tan.
\newblock A survey on evolutionary constrained multiobjective optimization.
\newblock {\em IEEE Transactions on Evolutionary Computation}, 27(2):201--221,
  2023.

\bibitem{Margossian19}
Charles~C. Margossian.
\newblock A review of automatic differentiation and its efficient
  implementation.
\newblock {\em WIREs Data Mining Knowl. Discov.}, 9(4), 2019.

\bibitem{martin:18}
A.~Mart\'in and O.~Sch{\"u}tze.
\newblock Pareto tracer: A predictor-corrector method for multi-objective
  optimization problems.
\newblock {\em Engineering Optimization}, 50(3):516--536, 2018.

\bibitem{miettinen2012nonlinear}
Kaisa Miettinen.
\newblock {\em Nonlinear multiobjective optimization}, volume~12.
\newblock Springer Science \& Business Media, 2012.

\bibitem{MuandetFSS17}
Krikamol Muandet, Kenji Fukumizu, Bharath~K. Sriperumbudur, and Bernhard
  Sch{\"{o}}lkopf.
\newblock {Kernel Mean Embedding of Distributions: A Review and Beyond}.
\newblock {\em Found. Trends Mach. Learn.}, 10(1-2):1--141, 2017.

\bibitem{NocedalW99}
Jorge Nocedal and Stephen~J. Wright.
\newblock {\em {Numerical Optimization}}.
\newblock Springer, 1999.

\bibitem{math13101626}
Angel~E. Rodriguez-Fernandez, Hao Wang, and Oliver Schütze.
\newblock {Reference Set Generator: A Method for Pareto Front Approximation and
  Reference Set Generation}.
\newblock {\em Mathematics}, 13(10), 2025.

\bibitem{schuetze_pt:24}
Oliver Sch{\"u}tze and Oliver Cuate.
\newblock The {P}areto tracer for the treatment of degenerated multi-objective
  optimization problems.
\newblock {\em Engineering Optimization}, pages 1--26, 2024.

\bibitem{SchutzeELC12}
Oliver Sch{\"{u}}tze, Xavier Esquivel, Adriana Lara, and Carlos A.~Coello
  Coello.
\newblock {Using the Averaged Hausdorff Distance as a Performance Measure in
  Evolutionary Multiobjective Optimization}.
\newblock {\em {IEEE} Trans. Evol. Comput.}, 16(4):504--522, 2012.

\bibitem{sinha:22}
A.~Sinha and J.~Wallenius.
\newblock {MCDM}, {EMO} and hybrid approaches: Tutorial and review.
\newblock {\em Mathematical and Computational Applications}, 27(6), 2022.

\bibitem{Sosa-HernandezS20}
V{\'{\i}}ctor~Adri{\'{a}}n Sosa{-}Hern{\'{a}}ndez, Oliver Sch{\"{u}}tze, Hao
  Wang, Andr{\'{e}}~H. Deutz, and Michael Emmerich.
\newblock {The Set-Based Hypervolume Newton Method for Bi-Objective
  Optimization}.
\newblock {\em {IEEE} Trans. Cybern.}, 50(5):2186--2196, 2020.

\bibitem{SriperumbudurGFSL10}
Bharath~K. Sriperumbudur, Arthur Gretton, Kenji Fukumizu, Bernhard
  Sch{\"{o}}lkopf, and Gert R.~G. Lanckriet.
\newblock {Hilbert Space Embeddings and Metrics on Probability Measures}.
\newblock {\em J. Mach. Learn. Res.}, 11:1517--1561, 2010.

\bibitem{tretter2008spectral}
Christiane Tretter.
\newblock {\em Spectral theory of block operator matrices and applications}.
\newblock World Scientific, 2008.

\bibitem{VayerG23}
Titouan Vayer and R{\'{e}}mi Gribonval.
\newblock {Controlling Wasserstein Distances by Kernel Norms with Application
  to Compressive Statistical Learning}.
\newblock {\em J. Mach. Learn. Res.}, 24:149:1--149:51, 2023.

\bibitem{WangRUCS24}
Hao Wang, Angel~E. Rodriguez{-}Fernandez, Lourdes Uribe, Andr{\'{e}}~H. Deutz,
  Oziel Cort{\'{e}}s{-}Pi{\~{n}}a, and Oliver Sch{\"{u}}tze.
\newblock {A Newton Method for Hausdorff Approximations of the Pareto Front
  within Multi-objective Evolutionary Algorithms}.
\newblock {\em CoRR}, abs/2405.05721, 2024.

\bibitem{0025YA24}
Hao Wang, Kaifeng Yang, and Michael Affenzeller.
\newblock {Probability Distribution of Hypervolume Improvement in Bi-objective
  Bayesian Optimization}.
\newblock In {\em Forty-first International Conference on Machine Learning,
  {ICML} 2024, Vienna, Austria, July 21-27, 2024}. OpenReview.net, 2024.

\bibitem{YangEDB19}
Kaifeng Yang, Michael Emmerich, Andr{\'{e}}~H. Deutz, and Thomas B{\"{a}}ck.
\newblock Efficient computation of expected hypervolume improvement using box
  decomposition algorithms.
\newblock {\em J. Glob. Optim.}, 75(1):3--34, 2019.

\bibitem{ZhangL07}
Qingfu Zhang and Hui Li.
\newblock {MOEA/D:} {A} multiobjective evolutionary algorithm based on
  decomposition.
\newblock {\em {IEEE} Trans. Evol. Comput.}, 11(6):712--731, 2007.

\bibitem{zitzler2000comparison}
Eckart Zitzler, Kalyanmoy Deb, and Lothar Thiele.
\newblock Comparison of multiobjective evolutionary algorithms: Empirical
  results.
\newblock {\em Evolutionary computation}, 8(2):173--195, 2000.

\end{thebibliography}

%%%%%%%%%%%%%%%%%%%%%%%%%%%%%%%%%%%%%%%%%%%%%%%%%%%%%%%%%%%%

\appendix

\section{Proofs}
\subsection{Proof of \cref{lemma:first-order-MMD}} \label{sec:proof-first-order-MMD}
\begin{proof}
We take the spectral representation of MMD (\cref{eq:spectrum-MMD}): 
%%%%%%%%%
\begin{align*}
    &\MMD^2(Y, R) \\
    &= \int_{\R^k} \left\vert \widehat{\varphi}_Y  - \widehat{\varphi}_R \right\vert^2 G\left(\diff \vec{\omega}\right) \\
    &=\frac{1}{\mu^2}\int_{\R^k} \left[\sum_{\alpha,\beta} e^{\iu \langle\vomega, \vy^{\,\alpha} - \vy^{\,\beta}\rangle} - \sum_{\alpha, \beta} \left(e^{\iu \langle\vomega, \vy^{\,\alpha} - \vr^{\,\beta}\rangle} + e^{\iu \langle\vomega, \vr^{\,\alpha} - \vy^{\,\beta}\rangle}\right) + \sum_{\alpha, \beta} e^{\iu \langle\vomega, \vr^{\,\alpha} - \vr^{\,\beta}\rangle} \right]G\left(\diff \vomega\right)
\end{align*} 
%%%%%%%%%
Notice that:
%%%%%%%%%
\begin{align}
&\frac{\partial}{\partial \vy^{\,\ell}}\sum_{\alpha,\beta} e^{\iu \langle \vomega, \vy^{\,\alpha} - \vy^{\,\beta} \rangle}
     = \iu \vomega\sum_{\alpha} e^{\iu \langle \vomega, \vy^{\,\ell} - \vy^{\,\alpha} \rangle} - e^{-\iu \langle \vomega, \vy^{\,\ell} - \vy^{\,\alpha} \rangle} = 2\vomega \sum_{\alpha\neq\ell}\sin\left(\langle \vomega, \vy^{\,\alpha} - \vy^{\,\ell} \rangle\right) \\
&\frac{\partial}{\partial \vy^{\,\ell}}\sum_{\alpha,\beta} \left(e^{\iu \langle \vomega, \vy^{\,\alpha} - \vr^{\,\beta} \rangle} + e^{\iu \langle \vomega, \vr^{\,\alpha} - \vy^{\,\beta} \rangle}\right) = 
2\vomega \sum_{\alpha}\sin\left(\langle \vomega, \vr^{\,\alpha} - \vy^{\,\ell} \rangle\right)
\end{align}
%%%%%%%%%
We express the MMD first-order derivative as follows: 
%%%%%%%%%
\begin{align}
    &\frac{\partial}{\partial \vy^{\,\ell}}\MMD^2(Y, R) 
    =\frac{2}{\mu^2}\int_{\R^k}\vomega\bigg(\sum_{\alpha\neq\ell} \sin (\langle \vomega, \vy^{\,\alpha} - \vy^{\,\ell} \rangle) - \sum_{\beta}\sin (\langle \vomega, \vr^{\,\beta} - \vy^{\,\ell} \rangle)\bigg) G\left(\diff \vomega\right) \label{eq:MMD-gradient-integral}
\end{align}
%%%%%%%%%
For critical points of MMD, i.e., $\partial/\partial \vy^{\,\ell}\MMD^2(Y, R) =0$, the following condition must hold since $G$ is non-degenerating, non-negative Borel measure:
%%%%%%%%%
\begin{align*}
    &\sum_{\alpha\neq\ell} \sin (\langle \vomega, \vy^{\,\alpha} - \vy^{\,\ell} \rangle) = \sum_{\beta}\sin (\langle \vomega, \vr^{\,\beta} - \vy^{\,\ell} \rangle), \quad\forall \vomega
\end{align*}
%%%%%%%%%
\end{proof}

\subsection{Proof of \cref{lemma:pareto-optimal-MMD}} \label{sec:proof-pareto-optimal-MMD}
\begin{proof}
    Assume $\vec{x}$ is locally efficient. We must have $\nabla f_1(\vec{x}) = -c\nabla f_2(\vec{x})$ for some $c>0$. Denote by $\vec{a} = \nabla f_2(\vec{x})$, $\nabla f_1(\vec{x}) = -c\vec{a}$, and $\vec{n} = (-c, 1)^\top$, we have $\operatorname{D}\!F(\vx)=\vec{n}\vec{a}^\top$. Note that $-1/c$ is the slope of the tangent space/line of the Pareto front at $\vy$ ($c$ is the slope of the normal space). The question for critical points is equivalent to asking at what $c$ value, the MMD gradient is zero. We consider only one approximation point, i.e., $Y = \{\vy\}, \vy = F(\vx)$. The critical point condition is:
%%%%%%%%%
\begin{align*}
\frac{\partial}{\partial \vx}\MMD^2(F(\vx), R) &= \left(\frac{\partial}{\partial \vy}\MMD^2(\{\vy\}, R)\right)\operatorname{D}\!F(\vx) \\ 
&= -\frac{2\vec{a}^\top}{\mu}\int_{\R^2} \langle\vomega, \vec{n}\rangle\sum_{k}\sin (\langle \vomega, \vr^{\,k} - \vy\, \rangle)\phi_k(\vomega) \diff\vomega = 0,
\end{align*}
%%%%%%%%%
where for the Gaussian kernel, its spectral density is $\phi_k(\vomega) = (4\pi\theta)^{-k/2}\exp(-\norm{\vomega}_2^2 / 4\theta)$. Let $\vec{\nu} = \vomega / \sqrt{2\theta}$, the above integral can be simplified to: 
%%%%%%%%%
\begin{align*}
&\int_{\R^2} \langle\vec{\nu}, \vec{n}\rangle\underbrace{\sum_{k}\sin (\sqrt{2\theta}\langle \vec{\nu}, \vr^{\,k} - \vy\, \rangle)}_{D(\vec{\nu};R, \vy, \theta)} \phi_k(\vec{\nu}) \diff\vec{\nu} = 0 \\
&\Longleftrightarrow c\int_{\R^2} \nu_1 D(\vec{\nu};R, \vy, \theta) \phi(\vec{\nu}) \diff\vec{\nu} = \int_{\R^2} \nu_2 D(\vec{\nu};R, \vy, \theta) \phi_k(\vec{\nu}) \diff\vec{\nu} \\
&\Longleftrightarrow c = \frac{\int_{\R^2} \nu_2 D(\vec{\nu};R, \vy, \theta) \phi_k(\vec{\nu}) \diff\vec{\nu}}{\int_{\R^2} \nu_1 D(\vec{\nu};R, \vy, \theta) \phi_k(\vec{\nu}) \diff\vec{\nu}}
\end{align*}
%%%%%%%%%
where $\phi_k(\vec{\nu}) = (2\pi)^{-1}\exp(-\norm{\vec{\nu}}^2_2/2)$ is the density function of the standard bi-variate Gaussian.
\end{proof} 

\subsection{Proof of \cref{thm:eigenspectrum-Hessian-block}} \label{sec:proof-eigenspectrum-Hessian-block}
\begin{proof}
Firstly, we consider the spectral representation of the Hessian by differentiating~\cref{eq:MMD-gradient-integral}: 
%%%%%%%%%
\begin{align*}
&\frac{\partial^2}{\partial \vy^{\,m}\partial \vy^{\,\ell}}\MMD^2(Y, R) \\
&=
\begin{dcases}
\frac{2}{\mu^2} \int_{\R^k} \vomega \vomega^\top \left( \sum_{\alpha} \cos (\langle \omega, \vr^{\,\alpha} - \vy^{\,\ell} \rangle) - \sum_{\beta\neq \ell} \cos (\langle \vomega, \vy^{\,\beta} - \vy^{\,\ell} \rangle) \right) G(\diff \vomega), & m = \ell, \\
\frac{2}{\mu^2} \int_{\R^k} \vomega \vomega^\top \cos (\langle \vomega, \vy^{\,m} - \vy^{\,\ell} \rangle) G(\diff \vomega), & m \neq \ell.
\end{dcases}
\end{align*}
%%%%%%%%%
The Hessian matrix is of the form $\mathbb{E} [\vomega \vomega^\top]$, which is a covariance-like structure weighted by cosine terms. Such matrices are symmetric and positive semi-definite when the weight is nonnegative. Since cosines oscillate between $[-1,1]$, they can lead to sign changes in eigenvalues.\\
Case $m\neq\ell$: Consider the Rayleigh quotient: $\forall \vz\in\R^k, \vec{z}^\top (H^m_\ell) \vec{z}/\vz^\top \vz$, surjective in $[\sigma_{\text{min}}(H^m_\ell), \sigma_{\text{max}}(H^m_\ell)]$.
An upper bound is trivial:
%%%%%%%%%
\begin{align} \label{eq:Hessian-off-diagonal-upper-bound}
\sigma_{\text{max}}(H^m_\ell) &= \sup_{\vz\in \R^k}\frac{\vec{z}^\top H^m_\ell \vec{z}}{\vz^\top \vz} \nonumber \\
&= \frac{2}{\mu^2}  \sup_{\vz\in \R^k} \int_{\R^k} \frac{(\vz^\top\vomega)^2}{\norm{\vz}_2^2} \cos (\langle \vomega, \vy^{\,m} - \vy^{\,\ell} \rangle) G(\diff \vomega)\leq \frac{2}{\mu^2}\underbrace{\mathbb{E}\left[\norm{\vomega}_2^2\right]}_{m_2}
\end{align}
%%%%%%%%%
Taking the inequality $1 - \cos x\leq x^2 / 2$, we have a lower bound:
%%%%%%%%%
\begin{align}
    \sigma_{\text{min}}(H^m_\ell) &= \inf_{\vz\in \R^k}\frac{\vec{z}^\top H \vec{z}}{\vz^\top \vz} =
    \frac{2}{\mu^2}  \inf_{\vz\in \R^k}\int_{\R^k} \frac{(\vz^\top\vomega)^2}{\norm{\vz}_2^2}\cos (\langle \vomega, \vy^{\,m} - \vy^{\,\ell} \rangle) G(\diff \vomega)  \nonumber \\
    &\geq \frac{2}{\mu^2}  \inf_{\vz\in \R^k}\int_{\R^k}\frac{(\vz^\top\vomega)^2}{\norm{\vz}_2^2} \left(1  - \frac{1}{2}\langle \vomega, \vy^{\,m} - \vy^{\,\ell} \rangle^2\right) G(\diff \vomega) \nonumber\\
    & \geq \frac{2}{\mu^2}\bigg[\sigma_{\text{min}}\underbrace{\left(\mathbb{E}\left(\vomega\vomega^\top\right)\right)}_{C} - \frac{\norm{\vy^{\,m} - \vy^{\,\ell}}_2^2}{2}\underbrace{\mathbb{E}\left(\norm{\vomega}_2^4 \right)}_{m_4} \bigg] \label{eq:Hessian-off-diagonal-lower-bound}
\end{align}
%%%%%%%%%
Case $m=\ell$: 
%%%%%%%%%
\begin{align*}
    H^m_\ell = \underbrace{\frac{2}{\mu^2} \sum_{\alpha} \int_{\R^k} \vomega \vomega^\top \cos(\langle \omega, \vr^{\,\alpha} - \vy^{\,\ell} \rangle) G(\diff \vomega)}_{H_1} - \underbrace{\frac{2}{\mu^2} \sum_{\beta\neq \ell}  \int_{\R^k} \vomega \vomega^\top  \cos (\langle \vomega, \vy^{\,\beta} - \vy^{\,\ell} \rangle) G(\diff \vomega)}_{H_2}
\end{align*}
%%%%%%%%%
Applying \cref{eq:Hessian-off-diagonal-upper-bound,eq:Hessian-off-diagonal-lower-bound}, we have:
%%%%%%%%%
\begin{align*}
     &\frac{2}{\mu}\left(\sigma_{\text{min}}\left(C\right) - \frac{m_4}{2}\bar{D}^2(\vy^{\,\ell}, R)\right) \leq  \sigma_{\text{min}}(H_1) \leq  \sigma_{\text{max}}(H_1) \leq \frac{2}{\mu} m_2 \\
     &\frac{2}{\mu}\left(\sigma_{\text{min}}\left(C\right) - \frac{m_4}{2}\bar{D}^2(\vy^{\,\ell}, Y)\right) \leq  \sigma_{\text{min}}(H_2) \leq  \sigma_{\text{max}}(H_2) \leq \frac{2}{\mu} m_2,
\end{align*}
%%%%%%%%%
where $\bar{D}^2(\vy^{\,\ell}, R)=\frac{1}{\mu}\sum_{\alpha} \norm{\vr^{\,\alpha} - \vy^{\,\ell}}_2^2$ and $\bar{D}^2(\vy^{\,\ell}, Y)=\frac{1}{\mu}\sum_{\beta\neq\ell} \norm{\vy^{\,\beta} - \vy^{\,\ell}}_2^2$.
Applying Weyl's inequality to $H^m_\ell$, i.e., $\sigma_{\text{min}}(H^m_\ell)\geq \sigma_{\text{min}}(H_1) - \sigma_{\text{max}}(H_2)$ and $\sigma_{\text{max}}(H)\geq \sigma_{\text{max}}(H_1) - \sigma_{\text{min}}(H_2)$ leads to:
%%%%%%%%%
\begin{align}
    \frac{2}{\mu}\left(\sigma_{\text{min}}\left(C\right) - m_2 - \frac{m_4}{2}\bar{D}^2(\vy^{\,\ell}, R)\right) \leq \sigma(H^m_\ell) \leq \frac{2}{\mu}\left(m_2 - \sigma_{\text{min}}\left(C\right) + \frac{m_4}{2}\bar{D}^2(\vy^{\,\ell}, Y)\right).
\end{align}
%%%%%%%%%
\end{proof}

\subsection{Proof of \cref{corollary:eigenspectrum-Hessian}} \label{sec:proof-eigenspectrum-Hessian}
%%%%%%%%%
\begin{proof}
    We use Gershgorin's circle theorem for block matrices~\cite{tretter2008spectral}. Consider a symmetric matrix $H = (H^m_\ell) \in \R^{\mu k \times \mu k}, H^m_\ell\in \R^{k \times k}$. Let $\sigma(H), \sigma(H)$ represent the singular values and eigenvalues of $H$, respectively. If we denote
    %%%%%%%%%
    $$
    G_\ell = \sigma(H^\ell_\ell) \cup \left\{\bigcup_{i=1}^k \left[ \sigma_i(H^\ell_\ell) - S_\ell, \sigma_i(H^\ell_\ell) + S_\ell\right]\right\}, \quad S_\ell = \sum_{m\neq\ell} \sigma_{\text{max}}(H^m_\ell).
    $$
    %%%%%%%%%
    Then, the eigenvalues of $H$ are contained in the union of $G_\ell$. We apply this theorem to our case. Taking~\cref{eq:eigenspectrum-bound-off-diagonal}, we have: 
    %%%%%%%%%
    \begin{align*}
        &\sigma_{\text{max}}(H^m_\ell) \leq \frac{2}{\mu^2}\max\left\{\left\lvert\sigma_{\text{min}}(C) - \frac{m_4}{2}\norm{\vy^{\,m} - \vy^{\,\ell}}_2^2\right\rvert, m_2\right\}\leq  \frac{2}{\mu^2}\left(m_2 + \sigma_{\text{min}}(C) + \frac{m_4}{2}\norm{\vy^{\,m} - \vy^{\,\ell}}_2^2\right)\\
        &S_\ell \leq \frac{2(\mu - 1)}{\mu^2}\left(m_2 + \sigma_{\text{min}}(C)\right) + \frac{m_4}{\mu^2}\sum_{m\neq\ell}\norm{\vy^{\,m} - \vy^{\,\ell}}_2^2.
    \end{align*}
    %%%%%%%%%
    Denote by $-B_\ell$ and $B_\ell$ the lower and upper bounds in~\cref{eq:eigenspectrum-bound-diagonal}, i.e., $\sigma(H^\ell_\ell) \in [-B_\ell, B_\ell]$. We have: 
    %%%%%%%%%
    \begin{align*}
        \bigcup_{i=1}^k \left[ \sigma_i(H^\ell_\ell) - S_\ell, \sigma_i(H^\ell_\ell) + S_\ell\right] &\subseteq  \left[ \sigma_{\text{min}}(H^\ell_\ell) - S_\ell, \sigma_{\text{max}}(H^\ell_\ell) + S_\ell\right] \\ 
        &\subseteq [-B_\ell-S_\ell, B_\ell + S_\ell] \subseteq [-R_\ell, R_\ell],
    \end{align*}
    %%%%%%%%%
    and $\sigma(H^\ell_\ell) \in [0, B_\ell] \subseteq [-R_\ell, R_\ell]$. Hence we have $G_\ell\subseteq [-R_\ell, R_\ell]$, which completes the proof.
\end{proof}

\subsection{Example to illustrate \cref{lemma:pareto-optimal-MMD}} \label{sec:example-to-thm}
Here, we provide a simple, but visual example to illustrate \cref{lemma:pareto-optimal-MMD} and~\cref{example:first-order-condition}. We consider a very simple bi-objective problem: $F=(f_1, f_2), f_1 = (x_1 - 1)^2 + (x_2 - 1)^2, f_2 = (x_1 + 1)^2 + (x_2 + 1)^2, x_1, x_2\in[-2, 2]$, whose Pareto front is $f_2 = 2\left(2 - \sqrt{f_1/2}\right)^2, f_1\in[0,8]$ (indicated by the pink curve in~\cref{fig:MMD-first-order-example}). For an efficient point $\vec{x}$, it must satisfy $\exists \lambda, \nabla f_1(\vec{x}) + \lambda \nabla f_2(\vec{x}) = 0$ by the KKT theorem. The normal space of the Pareto front at $\vec{y} = F(\vec{x})$ is identified with the slope $f_2 / f_1 = \lambda$. We start the \algo method with the Gaussian kernel $k(\vec{x}, \vec{y})=\exp(-\theta\lVert\vec{x} - \vec{y}\rVert^2_2)$ from a single approximation point, which is already on the Pareto front (black cross), the method converges to the stationary point (black star), at which the normal space (black dashed line) approximately pass through the center of mass of four reference points (black points) when the length-scale $\theta$ is small (left sub-figure) since by~\cref{example:first-order-condition}, $\lambda\approx m_2 /m_1$, where $(m_1, m_2)$ is the center of reference points minus $\vec{y}$. When $\theta$ gets larger (right sub-figure), the normal space and the center of reference points differ substantially, as discussed in~\cref{example:first-order-condition}.
%%%%%%%%%
\begin{figure}[t]
  \centering
  \includegraphics[width=.48\textwidth, trim=130mm 23mm 0mm 22mm, clip]{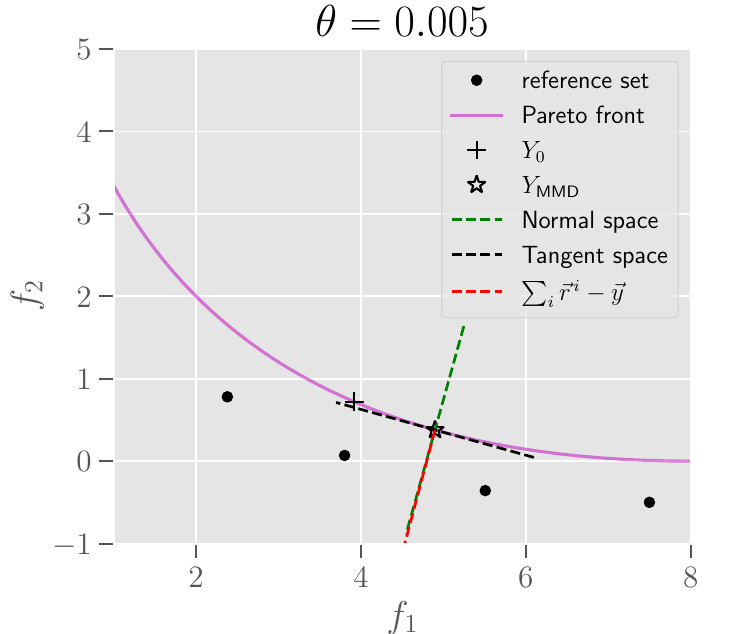}
  \hfill
  \includegraphics[width=.48\textwidth, trim=130mm 23mm 0mm 22mm, clip]{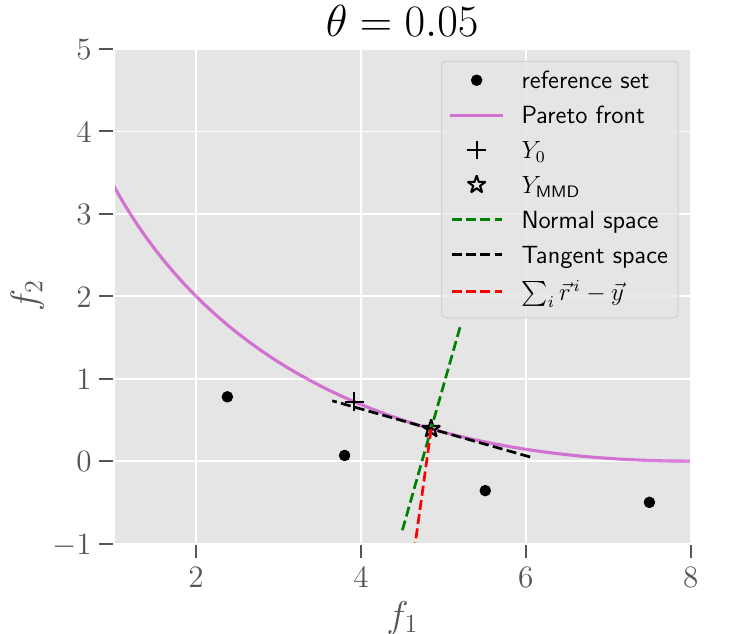}
  \caption{Example of the Pareto optimal condition of MMD with the Gaussian kernel for $|Y| = 1, |R|=4$ on a simple bi-objective problem: $f_1 = (x_1 - 1)^2 + (x_2 - 1)^2, f_2 = (x_1 + 1)^2 + (x_2 + 1)^2$. When the length-scale $\theta$ is small (left figure), at the optimal point (star marker), the normal space (green line) approximately passes through the center of mass of the reference set $R$. When $\theta$ is larger (right figure), the normal space and the center of mass of $R$ differ substantially at the optimal point.
  \label{fig:MMD-first-order-example}
  }
\end{figure}
%%%%%%%%%

\section{Details of reference set generation in~\cref{alg:MMDM-Newton}} \label{sec:shifting-detail}

To generate a reference set $R$ from the final population of a MOEA, denoted $Y_0$, we follow a procedure similar to that in~\cite{math13101626}, using the same parameter values as in~\cite{WangRUCS24}. This aims to produce a more uniformly distributed reference set for the Newton step. The process of deriving $R$ from $Y_0$ involves three main steps: a) filling, b) reduction, and c) shifting.

For a), \textbf{filling}, we first perform component detection using DBSCAN clustering~\cite{EsterKSX96}. Each detected component is then filled using a method that depends on the number of objectives. For bi-objective problems $F=(f_1, f_2)$, we sort the solutions by the first objective value to obtain $\vec{p}^{\,1}, \dots, \vec{p}^{\,\mu}$. Then, we place $\mu$ points (starting from $\vec{p}^{\,1}$) uniformly along the piecewise linear curve defined by connecting consecutive pairs: $(\vec{p}^{\,1}, \vec{p}^{\,2}), (\vec{p}^{\,2}, \vec{p}^{\,3}), \dots$. For $k > 2$ objectives, we apply Delaunay triangulation \cite{delaunay} to the set, and fill each triangle uniformly at random with a number of points proportional to its area.

After filling, we perform b), \textbf{reduction}, to return to exactly $\mu$ elements. This is achieved by applying $k$-means clustering to the filled set and selecting the centroids of the $\mu$ resulting clusters.

Finally, in step c), \textbf{shifting}, we ideally want to shift $Y_0$ orthogonally to the Pareto front, toward the Utopian region. This involves two steps: (i) computing a shift direction vector $\vec{\eta} \in \mathbb{R}^k$, and (ii) shifting each point in $Y_0$ by $\delta \vec{\eta}$, where $\delta > 0$, resulting in the reference set $R$.

To compute $\vec{\eta}$, we follow the procedure described by~\cite{WangRUCS24}, in which $\vec{\eta}$ is chosen to be orthogonal to the convex hull of the points $\vec{m}^{1}, \ldots, \vec{m}^{k}$. Each point $\vec{m}^{i} \in Y_0$ is defined as the point with the minimal value in the $i$-th objective. More precisely, we compute $\vec{\eta}$ as follows. Given $Y_0 = \{\vec{y}^{\;1}, \ldots, \vec{y}^{\,\mu}\}$, we define, for $i=1,\ldots, k$
\begin{equation}
    \vec{m}^{i} \coloneqq\vec{y}^{\,\ell}, \text{ such that } \vec{y}^{\,\ell}\in Y_0 \wedge \forall j\in[1..\mu], y^\ell_i \leq y^j_i\;,
\end{equation}
where $y^j_i$ denotes the $i$-th component of $\vec{y}^{\,j}$. Using these, we construct the matrix
\begin{equation}
 M := (\vec{m}^{2} - \vec{m}^{1}, \vec{m}^{3} - \vec{m}^{1}, \ldots, \vec{m}^{k} - \vec{m}^{1})\in\mathbb{R}^{k\times(k-1)}. 
\end{equation}
We then compute the QR factorization of $M$:
\begin{equation} \label{eq:M-matrix}
 M = Q\tilde{R} = (\vec{q}^{\,1},\ldots,\vec{q}^{\,k})\tilde{R},
\end{equation}
where $Q \in \mathbb{R}^{k \times k}$ is an orthogonal matrix with columns $\vec{q}_i$, and $\tilde{R} \in \mathbb{R}^{k \times (k - 1)}$ is an upper triangular matrix. The desired shifting direction is then given by
\begin{equation}
\label{eq:eta}
\vec{\eta} = -\text{sgn}(q_{1}^k) \frac{\vec{q}^{\,k}}{\|\vec{q}^{\,k}\|_2}.
\end{equation} 
Finally, the reference set $R =\{\vec{r}^{\,1}, \ldots, \vec{r}^{\,\mu}\}$ is obtained by shifting all points of $Y_0$ in the direction of $\vec{\eta}$ toward the utopian region, using the formula $\vec{r}^{\,i} = \vec{y}^{\,i} + \delta \vec{\eta}$ for $i = 1, \ldots, \mu$. Here, $\delta$ controls how far $Y_0$ is shifted.

\section{Implementation details} 
Each experiment case in~\cref{sec:experiments} is executed on an \texttt{Intel(R) Xeon(R) Gold 6126 CPU @ 2.60GHz} CPU (24 cores), which takes about 10 minutes to finish.

\subsection{Hessian preconditioning}\label{sec:preconditioning}
We took the Algorithm 3.3 from Nocedal's book~\cite{NocedalW99} to precondition the MMD Hessian $\nabla^2\MMD^2$, which modifies the diagonal of the Hessian, i.e., $\nabla^2\MMD^2 + \tau I, \tau >0$, and successively increases $\tau$ until $\nabla^2\MMD^2 + \tau I \succ 0$. We briefly recap it: given $\nabla^2\MMD^2$ we first perform Cholesky decomposition. If it fails (meaning it is indefinite), we set the initial $\tau=\max\{0, \beta - \min\{\operatorname{diag}(\nabla^2\MMD^2)\}\}$, where $\beta=10^{-6}$ and the operator $\operatorname{diag}$ takes the diagonal of a matrix. Then, we modify the Hessian to $\nabla^2\MMD^2 + \tau I$ and perform the Cholesky decomposition to check if it is positive-definite. If not, we double the $\tau$ value and perform the Cholesky step again. This procedure terminates whenever the Cholesky decomposition succeeds, which indicates $\nabla^2\MMD^2 + \tau I \succ 0$.

\subsection{Determine length-scale of the kernel} \label{sec:hyperparameter-setting}
For each test problem in~\cref{sec:experiments}, we propose a simple heuristic to determine the kernel function $k$ and the length-scale $\theta$ of it: since the Hessian matrix $\nabla^2 \MMD^2(Y,R;k,\theta)$ is often indefinite, we first precondition the Hessian and compute the condition number $\kappa$ for a choice of $k$ and $\theta$, and then choose the $(k, \theta)$ pair that minimizes the condition number, i.e., 
$$
\theta^* = \argmin\limits_{k, \theta} \kappa\left(\nabla^2 \MMD^2(Y,R;k,\theta) + \tau I\right),
$$
where $\tau >0$ is the preconditioning constant determined by the procedure in~\cref{sec:preconditioning}. Practically, instead of solving the above optimization problem, we only consider $k\in\{\text{Gaussian},\text{Laplace}\}$ and $\theta\in\{10^{-2}, 10^{-1}, 1, 10, 100, 500, 1\,000, 5\,000\}$. A simple grid search can determine the best choice, which we report in~\cref{tab:hyperparameter}.
%%%%%%%%%%%%%%%%%%
\begin{table}[ht]
\setlength{\tabcolsep}{2.7pt}
\caption{The choice of kernel type and the length-scale $\theta$ value for each problem and baseline MOEA.\label{tab:hyperparameter}}
\centering
\small
\begin{minipage}{0.49\textwidth}
\centering
\begin{tabular}{lllr}
\toprule
Baseline & Problem & Kernel & $\theta$ \\
\midrule
NSGA-II & ZDT1 & Gaussian & 2\,000 \\
NSGA-II & ZDT2 & Gaussian & 2\,000 \\
NSGA-II & ZDT3 & Gaussian & 2\,000 \\
NSGA-II & ZDT4 & Laplace & 1 \\
NSGA-II & DTLZ1 & Laplace & 500\\
NSGA-II & DTLZ2 & Laplace & 500 \\
NSGA-II & DTLZ3 & Laplace & 500 \\
NSGA-II & DTLZ4 & Laplace & 500 \\
NSGA-II & DTLZ5 & Laplace & 500 \\
NSGA-II & DTLZ6 & Laplace & 500 \\
NSGA-II & DTLZ7 & Laplace & 500 \\ \midrule
NSGA-III & ZDT1 & Gaussian & 2\,000 \\
NSGA-III & ZDT2 & Gaussian & 2\,000 \\
NSGA-III & ZDT3 & Gaussian & 2\,000 \\
NSGA-III & ZDT4 & Laplace & 1 \\
NSGA-III & DTLZ1 & Laplace & 500 \\
\end{tabular}
\end{minipage}
\hspace{-12mm}
\begin{minipage}{0.49\textwidth}
\centering
\begin{tabular}{lllr}
NSGA-III & DTLZ2 & Laplace & 500 \\
NSGA-III & DTLZ3 & Laplace & 500 \\
NSGA-III & DTLZ4 & Laplace & 500 \\
NSGA-III & DTLZ5 & Laplace & 500 \\
NSGA-III & DTLZ6 & Laplace & 500 \\
NSGA-III & DTLZ7 & Laplace & 500 \\ \midrule
MOEA/D & ZDT1 & Laplace & 1 \\
MOEA/D & ZDT2 & Laplace & 1 \\
MOEA/D & ZDT3 & Laplace & 100 \\
MOEA/D & ZDT4 & Laplace & 1 \\
MOEA/D & DTLZ1 & Laplace & 500 \\
MOEA/D & DTLZ2 & Laplace & 500 \\
MOEA/D & DTLZ3 & Laplace & 500 \\
MOEA/D & DTLZ4 & Gaussian & 2\,000 \\
MOEA/D & DTLZ5 & Laplace & 1 \\
MOEA/D & DTLZ6 & Laplace & 500 \\
MOEA/D & DTLZ7 & Gaussian & 5\,000 \\
\bottomrule
\end{tabular}
\end{minipage}
\end{table}
%%%%%%%%%%%%%%%%%%

\subsection{Estimating the CPU time of automatic differentiation} \label{sec:AD-CPU-time}
We measure the CPU time of function evaluations $F(\vec{x})$ and Jacobian/Hessian calls (with automatic differentiation) $\operatorname{D}\!F(\vec{x}),\operatorname{D}^2\!F(\vec{x})$ for each test problem in \cref{sec:experiments} on an \texttt{Intel(R) Xeon(R) Gold 6126 CPU @ 2.60GHz} CPU (24 cores). The problem implementation is taken from \texttt{Pymoo}\footnote{\url{https://pymoo.org/}} library~\cite{BlankD20} (Apache-2.0 License; version $0.6.1.1$) and the automatic differentiation is performed with the Python \texttt{JAX}\footnote{\url{https://docs.jax.dev/en/latest/quickstart.html}} library~\cite{jax2018github} (Apache-2.0 License; version $0.4.23$). Both the function evaluation and Jacobian/Hessian are repeated 100\,000 times with input $\vec{x}$ sampled u.a.r.~from the problem's domain. We report the summary statistics of the \emph{CPU times} in~\cref{tab:CPU_time}. Averaging the CPU time ratio of Jacobian to the function evaluations over all repetitions gives approximately a value of $1.47$, while the ratio of Hessian to the function evaluation is around $1.89$.
%%%%%%%%%%%%%%%%%%
\begin{table}[ht]
\centering
\setlength{\tabcolsep}{5pt}
\def\arraystretch{1.2}
\caption{
The mean, median, and upper $90\%$ percentile of the CPU time measured (in microseconds) over 100\,000 repetitions of function evaluations, Jacobian calls, and Hessian calls for each problem.
\label{tab:CPU_time}
}
\begin{tabular}{lccccccccc}
\toprule
\multirow{2}{*}{Problem} & \multicolumn{3}{c}{Function evaluation ($\mu s$)} & \multicolumn{3}{c}{Jacobian call ($\mu s$)} & \multicolumn{3}{c}{Hessian call ($\mu s$)} \\ \cline{2-10}
 & mean & median & 90\% & mean & median & 90\% & mean & median & 90\% \\
\midrule
ZDT1 & 3.524965 & 3.0 & 4.0 & 5.039280 & 5.0 & 6.0 & 6.870129 & 7.0 & 8.0 \\
ZDT2 & 3.498975 & 3.0 & 4.0 & 5.285733 & 5.0 & 6.0 & 6.436024 & 6.0 & 7.0 \\
ZDT3 & 3.483405 & 3.0 & 4.0 & 5.379644 & 5.0 & 6.0 & 8.003440 & 8.0 & 9.0 \\
ZDT4 & 3.462605 & 3.0 & 4.0 & 5.594336 & 5.0 & 6.0 & 8.334733 & 8.0 & 9.0 \\
DTLZ1 & 4.189592 & 4.0 & 5.0 & 5.927459 & 6.0 & 7.0 & 8.029110 & 8.0 & 9.0 \\
DTLZ2 & 4.460365 & 4.0 & 5.0 & 6.198592 & 6.0 & 7.0 & 7.415784 & 7.0 & 8.0 \\
DTLZ3 & 4.214652 & 4.0 & 5.0 & 6.240592 & 6.0 & 7.0 & 8.141171 & 8.0 & 9.0 \\
DTLZ4 & 4.159192 & 4.0 & 5.0 & 5.913939 & 6.0 & 7.0 & 7.383044 & 7.0 & 8.0 \\
DTLZ5 & 4.144941 & 4.0 & 5.0 & 6.206872 & 6.0 & 7.0 & 8.452215 & 8.0 & 9.0 \\
DTLZ6 & 4.252213 & 4.0 & 5.0 & 6.348753 & 6.0 & 7.0 & 8.859419 & 9.0 & 10.0 \\
DTLZ7 & 4.199972 & 4.0 & 5.0 & 5.864169 & 6.0 & 7.0 & 7.032360 & 7.0 & 8.0 \\
\bottomrule
\end{tabular}
\end{table}

%%%%%%%%%%%%%%%%%%

\end{document}